\documentclass{article}

\PassOptionsToPackage{numbers,sort&compress}{natbib}
\usepackage[preprint]{neurips_2026}

\usepackage[utf8]{inputenc} % allow utf-8 input
\usepackage[T1]{fontenc}    % use 8-bit T1 fonts
\usepackage{hyperref}       % hyperlinks
\usepackage{url}            % simple URL typesetting
\usepackage{booktabs}       % professional-quality tables
\usepackage{amsfonts}       % blackboard math symbols
\usepackage{nicefrac}       % compact symbols for 1/2, etc.
\usepackage{microtype}      % microtypography
\usepackage{xcolor}         % colors
\usepackage{amsmath}
\usepackage{graphicx}
\usepackage{amsthm}
\newtheorem{proposition}{Proposition}
\usepackage{subcaption}

\title{Functional Graphs for Predicting and Explaining Goal Failure in Sparse Goal-Conditioned RL}

\author{
  Shalley Dash\thanks{Corresponding author.} \\
  Associate Adjunct Professor, Business Analytics,\\
  Institute of Management Technology\\
  Ghaziabad, 201001  \\
  \texttt{sdash@imt.edu} \\
}

\begin{document}

\maketitle

\title{Policy-Induced Functional Graphs Explain Success and Failure in Sparse Goal-Conditioned RL}

\begin{abstract}
	Sparse goal-conditioned reinforcement learning can produce policies whose failures are hidden by aggregate success rates. We analyze trained goal-conditioned value policies through the deterministic functional graphs induced by greedy evaluation: for each goal, every state maps to a single successor, decomposing behavior into attractors and basins. This reveals a local-to-global structure in learned policies. We define local goal support (LGS), a one-step statistic measuring the fraction of valid neighboring states whose greedy successor is the goal. In deterministic sparse GridWorlds, zero LGS exactly precludes goal entry from non-goal starts. Empirically, weak LGS is a strong diagnostic of goal-level failure across update rules, curricula, larger grids, and bottleneck geometries: the fixed rule \(\mathrm{LGS}\leq 0.5\) identifies low-success goals with precision \(0.921\), recall \(0.929\), and F1 \(0.925\) in the main \(8{\times}8\) TD setting, with similar performance across variants. However, local support is not sufficient for global success: some supported goals still fail because distant states are captured by competing attractors or fragmented basin structure. We therefore introduce a compact post-hoc taxonomy of policy-induced graphs---goal-dominant, competitor-dominated, partial/contested, and fragmented---to characterize residual failure modes beyond local support. These results show that sparse GCRL failures can be understood as structured policy-induced dynamics, and that local one-step policy structure provides a cheap post-training diagnostic for goal-level failure.
\end{abstract}

\section{Introduction}

Goal-conditioned reinforcement learning (GCRL) aims to learn policies that can reach many goals from many states, often through goal-conditioned value functions such as Universal Value Function Approximators (UVFAs)~\citep{schaul2015uvfa}. In sparse-reward settings, however, learning remains difficult because successful trajectories are rare and credit assignment is weak. Classical responses to this problem include hindsight relabeling and goal curricula, which aim to increase useful goal-reaching experience during training~\citep{andrychowicz2017hindsight,florensa2018automatic, portelas2020automatic}. Yet after training, standard evaluation usually reports aggregate quantities such as average return or success rate. These metrics tell us whether a policy succeeds on average, but provide limited insight into why it succeeds for some goals, fails for others, or varies substantially across random seeds.

This limitation appears even in the controlled setting of sparse GridWorld. Table~\ref{tab:main_performance} shows the motivating empirical pattern in the open \(8{\times}8\) environment. TD learning improves mean greedy-evaluation success relative to Monte Carlo training, but it does not produce uniformly reliable policies: under TD-Uniform, mean evaluation success is \(0.44\), while individual seeds range from \(0.26\) to \(0.67\). An edge-biased goal-sampling curriculum also fails to resolve the issue. Although it changes the training distribution by emphasizing harder edge goals, its mean evaluation success remains close to TD-Uniform and its seed-level variability remains similarly large. Thus, even when the environment, horizon, architecture, update rule, and sampling scheme are fixed, learned policies can organize the same state space in sharply different ways. Aggregate success rates reveal that variability exists, but not what structural form it takes. This motivates our central question: for each goal, what closed-loop dynamics has the learned greedy policy induced, and how do those dynamics determine success or failure?

We study this question by analyzing the deterministic greedy policy induced after training. For a fixed goal \(g\), the learned value function defines a greedy successor map \(f_g:\mathcal{S}\to\mathcal{S}\), mapping each state to the next state selected by the greedy policy. Because each state has exactly one successor, \(f_g\) induces a finite functional graph. Equivalently, it defines a discrete dynamical system over the state space: trajectories eventually enter attractors, and each attractor has an associated basin of states that flow into it. In this representation, successful behavior corresponds to a goal-dominant basin, while failures appear as spurious fixed points, cycles, competing basins, or fragmented collections of attractors.

The key local question is whether the learned dynamics provide immediate inward attraction into the intended goal. In a local-movement GridWorld, every successful trajectory that reaches \(g\) from another state must enter \(g\) through one of its valid one-step neighbors. We therefore define \emph{local goal support} (LGS) as the fraction of valid neighboring states whose greedy successor is the goal. This is a simple local stability diagnostic: it measures whether the learned policy points into the goal in its immediate neighborhood. If \(\mathrm{LGS}(g)=0\), then no non-goal trajectory can enter \(g\), giving a local obstruction to finite-horizon success. Empirically, this diagnostic strongly predicts goal-level failure across TD, Monte Carlo, curriculum, larger-grid, and bottleneck variants.

However, local attraction is not sufficient for global success. A goal may be locally supported while distant states are captured by a non-goal attractor or split across several basins. This creates an asymmetry: weak local support can certify or strongly indicate failure, but success depends on the global basin structure of the full policy-induced graph. We therefore combine local diagnostics with global functional-graph quantities, including goal-basin fraction, largest competing basin, and fragmentation.

To the best of our knowledge, policy-induced functional-graph decomposition has not previously been used as an exact post-training diagnostic for learned sparse GCRL policies. Our contribution is not a new training algorithm, but a structural evaluation framework that connects local goal attraction, global basin organization, and observed success. Specifically, we make three contributions:
\begin{enumerate}
	\item We formalize the greedy evaluation policy of a trained GCRL agent as a goal-indexed functional graph, equivalently a finite discrete dynamical system, and use its attractors and basins to analyze policy behavior across goals.
	\item We introduce local goal support as an operational measure of local inward attraction around a goal and prove that zero support is a necessary obstruction to goal reachability in deterministic local-movement GridWorlds.
	\item Across TD, Monte Carlo, curriculum, \(12{\times}12\), and bottleneck variants, we show that local support predicts goal-level failure, while global graph structure separates residual failures into competitor-dominated, fragmented, partial, and goal-dominant regimes.
\end{enumerate}

\section{Related Work}
\label{sec:related_work}

\paragraph{Goal-conditioned RL and goal-space structure.}
Goal-conditioned reinforcement learning studies policies and value functions conditioned on desired goals, with UVFAs providing an early framework for value generalization across goals~\citep{schaul2015uvfa}. Sparse goal-reaching has traditionally motivated hindsight relabeling and automatic curricula~\citep{andrychowicz2017hindsight,florensa2018automatic,portelas2020automatic}. More recent work has broadened this toolkit through supervised goal-conditioned learning~\citep{ghosh2021learning,yang2022rethinking}, contrastive goal-conditioned representations~\citep{eysenbach2022contrastive}, and temporal-distance or quasimetric value representations~\citep{bae2024tldr,myers2025offline}. These approaches make sparse goal-reaching more learnable by using supervised losses, contrastive objectives, or distance-like structure to improve learning and generalization. Our focus is different: after a goal-conditioned policy has been trained, we ask what closed-loop structure it induces over the state space and how that structure explains goal-wise success and failure.

\paragraph{Geometric, dynamical, and structural priors in RL.}

More broadly, there is growing interest in incorporating geometric, physical, or dynamical structure into RL, including successor-style representations~\citep{dayan1993successor,barreto2017successor} and physics-informed value regularization~\citep{giammarino2025physics}. These works typically use structure as an inductive bias for learning, planning, or transfer. In contrast, we use finite dynamical structure as a post-training diagnostic of the closed-loop greedy policy. Our use of dynamical-systems structure is different. We do not impose a prior during training. Instead, we analyze the deterministic greedy policy after training as a finite discrete dynamical system. For each goal, the policy defines a successor map \(f_g:\mathcal{S}\to\mathcal{S}\), whose attractors and basins reveal how the learned policy organizes behavior. Thus, the dynamical-systems object in our paper is not the environment model or a learned latent transition operator, but the closed-loop policy-induced state map.

\paragraph{Graphs, neural dynamics, and policy diagnostics.}
Graphs are widely used in RL as relational representations, graph-structured policy architectures, planning substrates, and learned abstractions~\citep{zambaldi2019relational,wang2018nervenet,fathinezhad2023gnnrl}. Dynamical-systems ideas have also shaped neural-network architectures, including continuous-depth models, equilibrium networks, and graph neural dynamics~\citep{chen2018neural,bai2019deep,xhonneux2020continuous}. These works typically use graphs or dynamics as representational, architectural, or computational structure. Our use is different: we combine a graph-theoretic functional-map view with finite discrete-dynamical-system concepts to diagnose the closed-loop behavior induced after training by a greedy goal-conditioned policy. The resulting policy-induced graph is not part of the environment, architecture, or planner; it is an extracted object used to identify missing local goal entry, dominant competing attractors, and fragmentation across basins.

This complements standard aggregate evaluation by turning a trained policy into a structural readout: local entry into the goal, global basin dominance, competing attractors, and fragmentation.

\section{Experimental Setup}
\label{sec:setup}

We instantiate sparse GCRL in finite deterministic GridWorlds. Each episode samples a start state \(s_0\) and a goal \(g\), and the agent receives reward only on reaching the goal; otherwise rewards are zero until the finite horizon \(H\). Thus the task has the central difficulty of sparse goal-conditioned learning: most trajectories provide no positive reward signal, and successful behavior must be learned across many start--goal pairs. We use GridWorlds because our analysis requires exact enumeration of the learned closed-loop policy. For each trained seed and goal, we evaluate all start states, trace the greedy successor map, and compute the complete attractor--basin decomposition. The open \(8{\times}8\) GridWorld is the main setting, with robustness checks in a larger \(12{\times}12\) GridWorld and an \(8{\times}8\) bottleneck variant. This controlled setting is a first step: it preserves the sparse GCRL structure while making the induced policy dynamics fully observable and countable. Details are summarized in Table~\ref{tab:settings}. Full hyperparameters and implementation details are provided in Appendix~\ref{app:implementation_details}.
 
\begin{table}[t]
	\centering
	\footnotesize
	\setlength{\tabcolsep}{3.5pt}
	\renewcommand{\arraystretch}{0.90}
	\caption{Main experimental settings. All evaluations use deterministic greedy rollouts over all valid start--goal pairs. TD denotes one-step temporal-difference learning; MC denotes Monte Carlo return targets.}
	\label{tab:settings}
	\begin{tabular}{lccccc}
		\toprule
		Condition & Grid & Obst. & \(H\) & Learning / sampling & Seeds \\
		\midrule
		Open-8 TD uniform     & \(8{\times}8\)   & No  & 16 & TD, uniform       & 20 \\
		Open-8 MC uniform     & \(8{\times}8\)   & No  & 16 & MC, uniform       & 20 \\
		Open-8 TD edge        & \(8{\times}8\)   & No  & 16 & TD, edge-biased   & 20 \\
		Open-12 TD uniform    & \(12{\times}12\) & No  & 24 & TD, uniform       & 20 \\
		Bottleneck TD uniform & \(8{\times}8\)   & Yes & 24 & TD, uniform       & 20 \\
		\bottomrule
	\end{tabular}
\end{table}

Agents are trained with a goal-conditioned value function \(V_\theta(s,g)\) and \(\epsilon\)-greedy exploration. At evaluation, exploration is disabled. For each trained seed and goal \(g\), we roll out the deterministic greedy policy from every valid start state \(s\neq g\), using the same horizon \(H\) as in training. We record finite-horizon success and the induced trajectory. Because both the environment and evaluated policy are deterministic, each fixed goal induces a unique successor map over states; variation across goals and seeds therefore reflects differences in the learned closed-loop policy structure rather than stochastic evaluation noise.

\paragraph{Aggregate performance hides structural variation.}
\label{sec:aggregate_puzzle}

Table~\ref{tab:main_performance} reports standard aggregate performance in the open \(8{\times}8\) GridWorld over 20 seeds. TD improves mean evaluation success relative to MC, but leaves substantial seed-level variation: TD-Uniform ranges from \(0.26\) to \(0.67\) despite identical environment, architecture, learning rule, and goal-sampling distribution. Edge-biased sampling also does not resolve the variation: although it shifts training exposure toward harder edge goals, its mean evaluation success remains close to TD-Uniform and its variability remains high.

These results show that performance is not explained by update rule or goal exposure alone. Aggregate success does not reveal whether failures arise from missing local entry into the goal, dominant non-goal attractors, cycles, or fragmented basins. We therefore analyze the learned greedy policy itself, asking how it organizes trajectories over the state space for each goal.
\begin{table}[t]
	\centering
	\footnotesize
	\setlength{\tabcolsep}{3.5pt}
	\renewcommand{\arraystretch}{0.90}
	\begin{tabular}{lcccccc}
		\toprule
		Method & Train & Last100 & Eval & Std & KL & Range \\
		\midrule
		MC-Uniform    & 0.19 & 0.22 & 0.19 & 0.07 & 0.03 & [0.08, 0.35] \\
		TD-Uniform    & 0.36 & 0.44 & 0.44 & 0.13 & 0.03 & [0.26, 0.67] \\
		TD-EdgeBias   & 0.35 & 0.40 & 0.46 & 0.13 & 0.04 & [0.15, 0.70] \\
		\bottomrule
	\end{tabular}
	\caption{Aggregate performance across 20 seeds in the \(8\times 8\) open GridWorld. TD improves mean success over MC, but substantial seed-level variability remains. Edge-biased sampling does not consistently improve overall success or reduce variability.}
	\label{tab:main_performance}
\end{table}

\section{Analytical Framework: Policy-Induced Functional Graphs}
\label{sec:framework}

To explain goal-level success and failure, we analyze the closed-loop dynamics induced by a learned goal-conditioned policy under greedy evaluation. For each fixed goal \(g\), the learned policy and deterministic environment define a state map: each state is mapped to the unique successor selected by the greedy policy. Representing this map as a functional graph gives a finite discrete dynamical system with one outgoing edge per state. The standard finite-map decomposition into cycles, attractors, and basins then makes policy behavior explicit: successful goals correspond to states flowing into the goal, while failures correspond to non-goal fixed points, cycles, competing basins, or fragmented basin structure~\citep{harary1969graph,flajolet2009analytic,elaydi2005discrete}. Finite-horizon evaluation scores this structure by counting which start states reach the goal within the rollout horizon.
\paragraph{Greedy Evaluation Induces a Goal-Indexed State Map}

Let \(\mathcal{S}\) denote the finite set of valid states, \(\mathcal{A}\) the action set, and \(T:\mathcal{S}\times\mathcal{A}\to\mathcal{S}\) the deterministic transition function. For a fixed goal \(g\), the learned value function \(V_\theta(s,g)\) induces a deterministic greedy policy \(\pi_g\), with ties resolved by a fixed rule. Executing this policy from an initial state \(s_0\) generates the trajectory
\[
s_0,\; s_1,\; s_2,\ldots,
\qquad s_{t+1}=T(s_t,\pi_g(s_t)).
\]
Equivalently, greedy execution is repeated application of the goal-indexed state map
\[
f_g:\mathcal{S}\to\mathcal{S}, 
\qquad 
f_g(s)=T(s,\pi_g(s)).
\]
Thus \(s_t=f_g^t(s_0)\). Because both the evaluated policy and the environment are deterministic, \(f_g\) assigns a unique successor to every valid state. This map is the closed-loop object analyzed below.

\paragraph{The State Map as a Functional Graph}

The goal-indexed map \(f_g\) can be represented as a directed graph
\(G_g=(\mathcal{S},E_g)\), with one node per valid state and one edge from each
state to its greedy successor:
\[
E_g=\{(s,f_g(s)) : s\in\mathcal{S}\}.
\]
Since every state has exactly one outgoing edge, \(G_g\) is a \emph{functional graph}, or mapping digraph. Functional graphs are the standard graph representation of finite maps: each connected component contains exactly one directed cycle, with all remaining states feeding into that cycle through directed in-trees~\citep{harary1969graph,flajolet2009analytic}. We give the short proof in Appendix~\ref{app:finite_maps}.

Viewed as a finite discrete dynamical system, the cycles of \(G_g\) are attractors and the states that flow into them form basins of attraction~\citep{elaydi2005discrete}. Thus, the graph representation is not only a visualization of the greedy policy; it makes the policy-induced attractor--basin structure exactly computable.

\begin{proposition}[Finite-map decomposition; standard]
	\label{prop:finite_map_decomposition}
	Let \(\mathcal{S}\) be finite and let \(f:\mathcal{S}\to\mathcal{S}\) be deterministic. Then every trajectory
	\[
	s, f(s), f^2(s), \ldots
	\]
	eventually enters a directed cycle. Equivalently, the induced functional graph decomposes into components, each containing exactly one directed cycle and the states whose forward paths feed into that cycle.
\end{proposition}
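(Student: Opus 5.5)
The plan is to prove the trajectory claim with a pigeonhole argument, and then get the component decomposition by looking at the weakly connected components of the functional graph \(G\) with edges \((s,f(s))\).

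\textbf{Eventual cycle.} Fix \(s\in\mathcal{S}\) and let \(n=|\mathcal{S}|\). Among the \(n+1\) states \(s,f(s),\ldots,f^{n}(s)\), two must coincide, say \(f^{i}(s)=f^{j}(s)\) with \(0\le i<j\le n\). Choose this pair with \(i\) minimal, and then \(j\) minimal, and set \(p=j-i\). Because \(f\) is deterministic, \(f^{t+p}(s)=f^{t}(s)\) for all \(t\ge i\). Minimality of \(j\) makes \(C=\{f^{i}(s),\ldots,f^{j-1}(s)\}\) consist of \(p\) distinct states with \(f\) permuting them cyclically. So \(C\) is a directed cycle in \(G\), which may be a fixed point when \(p=1\), and the trajectory enters it at time \(i\) and never leaves.

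\textbf{Each component has exactly one cycle.} Let \(K\) be a weakly connected component of \(G\).
\begin{itemize}
\item \emph{At least one cycle.} Take any state of \(K\) and apply the previous step; its cycle lies in \(K\), since forward edges stay inside the component.
\item \emph{At most one cycle.} This is the step I expect to need the most care, since "connected" here allows edges to be traversed backwards. I would argue by counting. \(K\) has \(|K|\) vertices, and every vertex has exactly one outgoing edge, so \(K\) has exactly \(|K|\) edges. A connected graph with as many edges as vertices has cyclomatic number \(|E|-|V|+1=1\). Hence its underlying undirected graph contains exactly one cycle.

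Every directed cycle of \(G\) is also an undirected cycle, and two distinct directed cycles have disjoint vertex sets because out-degrees are \(1\). So \(K\) contains at most one directed cycle.

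One subtlety must be checked: the single undirected cycle should itself be a directed cycle. This follows because a directed cycle exists in \(K\) by the first bullet, and it is the only undirected cycle available.
\end{itemize}

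\textbf{Alternative to the counting argument.} If I wanted to avoid counting, I could instead define an equivalence relation: \(s\sim s'\) iff their forward orbits eventually meet. This is the same as saying they reach the same cycle. Connectivity through any edge \((u,f(u))\) preserves \(\sim\), because \(u\) and \(f(u)\) trivially share their eventual cycle. So each component maps to a single cycle.

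\textbf{Identifying the component.} By the first step, every state in \(K\) flows forward into a cycle lying in \(K\), and that cycle is unique. Therefore \(K\) is exactly its cycle together with the states whose forward paths feed into it, which is the claimed decomposition.
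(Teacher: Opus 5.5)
Your proof is correct. The pigeonhole step matches the paper; where you differ is the ``at most one cycle per weakly connected component'' step.

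The paper's main-text proof only asserts this. The appendix version (Proposition A.1) proves it by label propagation: along any undirected path, two adjacent states joined by $u\to v$ or $v\to u$ have the same eventual cycle, so two distinct cycles cannot share a component. That is exactly your ``alternative'' argument.

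Your primary counting argument ($|E|=|V|$ per component, hence cyclomatic number $1$) is a genuinely different route, and it gives more. It shows that each component is unicyclic as an undirected graph, so everything off the cycle is a forest hanging on it. That is the structural content behind the cycle-plus-in-trees picture the paper cites but does not derive.

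The cost is reliance on the cycle-space fact, plus one point of care you should make explicit. The count is valid only for the underlying undirected \emph{multigraph}, which keeps self-loops (fixed points, including the absorbing goal) and the two parallel edges produced by a $2$-cycle $u\to v\to u$. If you pass to the simple underlying graph, the count breaks. A component that is a single fixed point has $|V|=1$, $|E|=0$, and a single $2$-cycle has $|V|=2$, $|E|=1$. The label-propagation argument avoids this bookkeeping and directly shows that the component equals the basin of its cycle.

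Finally, your ``subtlety'' that the unique undirected cycle is itself directed is not needed for the statement. Existence from your first bullet, plus vertex-disjointness of distinct directed cycles, already gives exactly one directed cycle.
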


\begin{proof}
	Because \(\mathcal{S}\) is finite, the sequence \(s, f(s), f^2(s), \ldots\) must eventually repeat a state. Determinism then implies that the repeated segment is a directed cycle. Since each state has exactly one outgoing edge, every forward path enters a unique cycle, and the states whose paths enter the same cycle form one component.
\end{proof}

Applied to \(f_g\), the directed cycles are attractors and their predecessor trees are basins of attraction~\citep{elaydi2005discrete}. Thus each learned greedy policy admits an exact goal-indexed attractor--basin decomposition. The decomposition is classical; our contribution is to use it as a diagnostic for learned sparse GCRL policies, where success corresponds to goal-dominant basins and failures appear as competing non-goal attractors, cycles, or fragmented basin structure.

\subsection{Attractors, Basins, and Failure Modes}

For a fixed goal \(g\), the functional graph of \(f_g\) decomposes the state space into attractors and their basins. We treat the goal as absorbing for graph construction, so \(f_g(g)=g\). Under this convention, the desired attractor is the goal self-loop; all other attractors correspond to failure modes under greedy execution.

Let \(\mathcal{A}_g\) denote the set of attractors of \(f_g\). An attractor may be:
\begin{itemize}
	\item \textbf{Goal attractor:} the absorbing goal state \(\{g\}\), represented as a self-loop.
	\item \textbf{Spurious fixed point:} a non-goal state \(\{s^\star\}\), \(s^\star\neq g\), such that \(f_g(s^\star)=s^\star\).
	\item \textbf{Cycle:} a recurrent orbit \(\{s_1,\ldots,s_k\}\), \(k>1\), in which the policy loops without reaching the goal.
\end{itemize}

The basin of an attractor \(A\in\mathcal{A}_g\) is
\[
\mathcal{B}_g(A)
=
\{s\in\mathcal{S}: \exists t\geq 0 \text{ such that } f_g^t(s)\in A\}.
\]
Because \(f_g\) is deterministic, each state has a unique forward trajectory and therefore belongs to exactly one attractor basin. The goal basin \(\mathcal{B}_g(\{g\})\) contains states whose trajectories eventually reach the goal. Non-goal basins contain states captured by spurious fixed points or cycles. Thus, the attractor--basin decomposition turns policy behavior into a structural object: success corresponds to flow into the goal basin, while failure corresponds to flow into non-goal basins. Formal definitions of goal attractors, spurious fixed points, non-goal cycles, and basins are collected in Appendix~\ref{app:definitions}.

\paragraph{Finite-horizon evaluation.}
The functional graph describes the eventual closed-loop structure induced by \(f_g\), while evaluation is horizon-limited. We therefore distinguish the eventual goal basin \(\mathcal{B}_g(\{g\})\) from the finite-horizon success set
\[
\mathcal{B}_{g,H}
=
\{s\in\mathcal{S}: \exists t\leq H \text{ such that } f_g^t(s)=g\}.
\]

Reported success is finite-horizon, whereas the functional graph describes eventual closed-loop structure. Specifically, evaluation counts start states that reach the goal within the rollout horizon \(H\), while the attractor--basin decomposition describes where trajectories eventually flow under repeated application of \(f_g\). Thus a state may belong to the eventual goal basin but still be counted as unsuccessful if it reaches the goal only after more than \(H\) steps. In our experiments, reported success always uses the same horizon as training; Appendix~\ref{app:finite_horizon} gives the formal distinction between finite-horizon success sets and eventual basins.

\section{Structural Readout}
\label{sec:structural_readout}

We apply the framework by extracting two linked readouts from each trained greedy policy. The first is the standard outcome readout: for each goal \(g\), we roll out the policy from every valid start state and compute finite-horizon success \(\mathrm{Succ}_H(g)\). The second is the structural readout: for the same policy and goal, we enumerate the one-step greedy map \(f_g(s)\), construct the induced functional graph, and compute local goal support, basin structure, competing attractors, and fragmentation. Thus the rollout readout measures what succeeds, while the graph readout explains how the policy organizes trajectories to produce success or failure.

\paragraph{Structural Metrics}

Table~\ref{tab:structural_metrics} summarizes the main quantities used in the analysis. A fuller list of recorded graph quantities, including attractor counts and types, is provided in Appendix~\ref{app:full_structural_metrics}; implementation details are in Appendix~\ref{app:implementation_details}.

\begin{table}[t]
	\centering
	\footnotesize
	\setlength{\tabcolsep}{3.5pt}
	\renewcommand{\arraystretch}{1.08}
	\caption{Structural quantities computed from the goal-indexed functional graph.}
	\label{tab:structural_metrics}
	\begin{tabular}{lll}
		\toprule
		Metric & Definition & Interpretation \\
		\midrule
		Success &
		\(\mathrm{Succ}_H(g)=|\mathcal{B}_{g,H}\setminus\{g\}|/(|\mathcal{S}|-1)\) &
		Starts reaching \(g\) within \(H\) \\[3pt]
		
		LGS &
		\(\mathrm{LGS}(g)=|N(g)|^{-1}\sum_{s\in N(g)}\mathbf{1}\{f_g(s)=g\}\) &
		Local entry support \\[3pt]
		
		Goal basin &
		\(\beta_g=|\mathcal{B}_g(\{g\})|/|\mathcal{S}|\) &
		Goal-attractor dominance \\[3pt]
		
		Competing basin &
		\(\mathrm{Comp}(g)=\max_{A\in\mathcal{A}_g,A\neq\{g\}}|\mathcal{B}_g(A)|/|\mathcal{S}|\) &
		Largest non-goal basin \\[3pt]
		
		Fragmentation &
		\(\mathrm{Frag}(g)=1-\sum_{A\in\mathcal{A}_g}p_A^2\) &
		Dispersion across basins \\
		\bottomrule
	\end{tabular}
	
	\vspace{0.3em}
	\begin{minipage}{0.96\linewidth}
		\scriptsize
		\emph{Note.} \(f_g\) is the greedy successor map; \(N(g)\) is the valid one-step neighborhood of \(g\); \(\mathcal{A}_g\) is the attractor set; \(\mathcal{B}_g(A)\) is the basin of attractor \(A\); \(\mathcal{B}_{g,H}\) is the horizon-\(H\) success set; and \(p_A=|\mathcal{B}_g(A)|/|\mathcal{S}|\). Full definitions and implementation details are in Appendix~\ref{app:full_structural_metrics}.
	\end{minipage}
\end{table}

For boundary, corner, and wall-adjacent goals, \(N(g)\) includes only valid neighboring states. If no non-goal attractor exists, \(\mathrm{Comp}(g)=0\). We report goal-basin fraction as a structural summary, but the explanatory analyses focus on local goal support, largest competing basin, and fragmentation rather than using goal-basin size itself as a predictor of success. One immediate consequence of the local-support definition is that its zero case is not merely descriptive: in four-neighbor GridWorlds, it gives an exact obstruction to goal entry. 

\begin{proposition}[Zero local support precludes goal entry]
	\label{prop:zero_lgs}
	Consider a deterministic four-neighbor GridWorld and a deterministic greedy successor map \(f_g:\mathcal{S}\to\mathcal{S}\) for goal \(g\). Let \(N(g)\) denote the set of valid one-step neighbors of \(g\), excluding walls and invalid states. If
	\[
	f_g(s)\neq g \qquad \forall s\in N(g),
	\]
	then no trajectory starting from any \(s_0\neq g\) can reach \(g\). Consequently, when evaluation starts exclude the goal state, \(\mathrm{LGS}(g)=0\) implies \(\mathrm{Succ}_H(g)=0\) for every horizon \(H\).
\end{proposition}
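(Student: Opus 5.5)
The plan is a first-hitting-time argument combined with the locality of transitions. Fix \(s_0\neq g\) and suppose, for contradiction, that the trajectory \(s_t=f_g^t(s_0)\) reaches \(g\). Let \(t^\star\geq 1\) be the \emph{first} time with \(s_{t^\star}=g\); it exists because \(s_0\neq g\). Minimality gives \(s_{t^\star-1}\neq g\), and by definition \(f_g(s_{t^\star-1})=g\).

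The key step is to show that \(s_{t^\star-1}\in N(g)\). In a deterministic four-neighbor GridWorld, the transition \(T(s,a)\) is either a valid adjacent cell of \(s\) or \(s\) itself, the latter when the move is blocked by a wall or the boundary. Since \(f_g(s)=T(s,\pi_g(s))\), the successor \(f_g(s)\) lies in \(\{s\}\cup N(s)\). Applied to \(s=s_{t^\star-1}\neq g\), the identity \(f_g(s)=g\) forces \(g\in N(s)\). Adjacency in the grid is symmetric, and \(s\) is a valid state, so \(s\in N(g)\). This contradicts the hypothesis \(f_g(s)\neq g\) for all \(s\in N(g)\). Hence no trajectory from a non-goal start ever hits \(g\), at any time, finite or eventual.

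For the consequence, \(\mathrm{LGS}(g)=0\) means \(\sum_{s\in N(g)}\mathbf{1}\{f_g(s)=g\}=0\), so \(f_g(s)\neq g\) on all of \(N(g)\). Note that if \(N(g)=\emptyset\) the normalization is undefined, but the conclusion holds vacuously because the argument above never uses \(N(g)\neq\emptyset\). By the first part, \(\mathcal{B}_{g,H}\setminus\{g\}=\emptyset\) for every \(H\), and therefore \(\mathrm{Succ}_H(g)=0\). The same argument also shows that the eventual goal basin \(\mathcal{B}_g(\{g\})\) reduces to \(\{g\}\), which is consistent with Proposition~\ref{prop:finite_map_decomposition}: every other state flows into a non-goal attractor.

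The main obstacle is stating the locality assumption on \(T\) precisely. In particular, blocked moves must leave the state in place rather than teleport it, and the neighbor relation must be symmetric. I would state explicitly that this holds for the GridWorlds used here, where walls are invalid states and boundary or wall collisions are self-transitions. I would also note that the goal-absorbing convention \(f_g(g)=g\) plays no role in the argument.
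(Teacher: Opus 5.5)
Your proposal is correct and follows the paper's first-hitting-time argument: the predecessor \(s_{t^\star-1}\neq g\) must lie in \(N(g)\), which contradicts \(\mathrm{LGS}(g)=0\). Your explicit locality step \(f_g(s)\in\{s\}\cup N(s)\), using that blocked moves are self-transitions and that adjacency is symmetric, spells out what the paper asserts directly from four-neighbor movement.
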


\begin{proof}
	Any trajectory that reaches \(g\) from \(s_0\neq g\) must have a first hitting time \(t\) such that \(s_t=g\) and \(s_{t-1}\neq g\). Under four-neighbor movement, the predecessor state \(s_{t-1}\) must be a valid one-step neighbor of \(g\), so \(s_{t-1}\in N(g)\). If \(\mathrm{LGS}(g)=0\), then \(f_g(s)\neq g\) for every \(s\in N(g)\), making the final transition into \(g\) impossible. Hence no non-goal start can reach \(g\), and \(\mathrm{Succ}_H(g)=0\) for every horizon \(H\).
\end{proof}

Proposition~\ref{prop:zero_lgs} gives a local, asymmetric diagnostic. Zero local support rules out goal entry from all non-goal starts, but positive or even full local support does not guarantee global success. A policy may route all immediate neighbors into the goal while distant states are still captured by competing attractors, cycles, or fragmented basins. Thus weak local support is a hard failure signal, whereas success requires global basin organization in the full policy-induced graph.

The next section tests this diagnostic empirically and then uses the global graph metrics to characterize the residual failure modes that local support alone cannot resolve.

\subsection{Policy Maps of Learned Functional Graphs}
\label{sec:policy_maps}

Before aggregating over all seed--goal pairs, we inspect representative policy-induced functional graphs. For a fixed goal \(g\), the learned greedy policy defines the one-step map \(f_g\), visualized as a vector field over states. The red star marks the goal, orange markers indicate non-goal attractors, and the statistics below each panel report finite-horizon success \((S)\), local goal support \((\mathrm{Sup})\), largest competing basin \((C)\), and fragmentation \((F)\).

Figure~\ref{fig:policy_maps} illustrates three policy-induced functional graphs from the same trained policy. The left panel shows a goal-dominant case: trajectories organize into a coherent basin flowing into the goal (3,4), with perfect success, full local support, and no competing basin. The middle panel shows a competition case: although the goal (4,3) is spatially close to the successful goal, a large non-goal attractor captures most states, producing low success despite partial local support. The right panel shows fragmented failure: the goal (7,3) has no local support, success is zero, and trajectories are split across multiple non-goal attractors.

These examples show what aggregate success rates hide. Nearby goals trained under the same protocol can induce sharply different functional graphs: one becomes a stable goal attractor, while another is displaced by competing or fragmented non-goal basins. Thus geometric proximity alone does not determine success; the relevant object is the policy-induced flow over the state space. The following sections quantify these patterns over all seeds, goals, and robustness settings. Additional policy-map examples and detailed interpretations are provided in Appendix~\ref{app:additional_maps}.

\begin{figure}[t]
	\centering
	\includegraphics[width=0.92\linewidth]{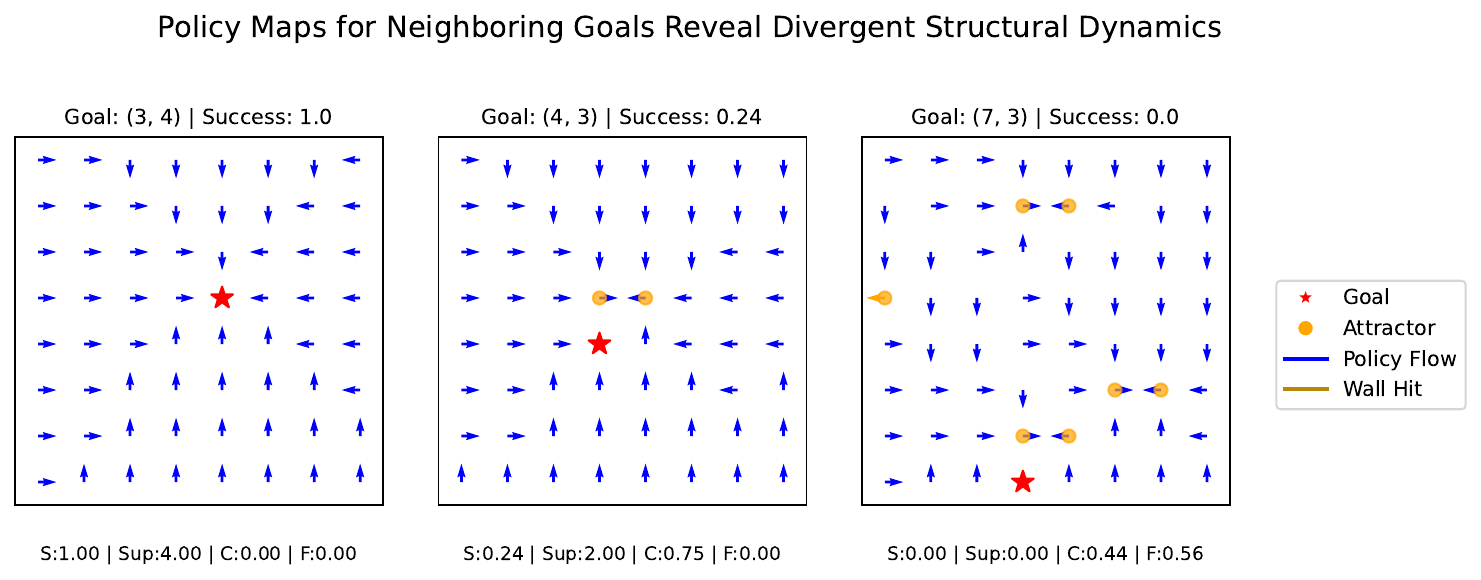}
	\caption{
		Policy-induced functional graphs from one \(8{\times}8\) TD-Uniform policy, seed \(0\), for goals \((3,4)\), \((4,3)\), and \((7,3)\). Arrows show the greedy one-step map \(f_g\); the red star marks the goal; orange markers denote non-goal attractors. Panel text reports success \(S\), local support count \(\mathrm{Sup}\), largest competing basin \(C\), and fragmentation \(F\). The panels illustrate goal-dominant, competitor-dominated, and fragmented regimes.
	}
	\label{fig:policy_maps}
\end{figure}

The maps show that success is governed not by geometry alone, but by the policy-induced structure over the state space: coherent goal basins, competing non-goal attractors, and fragmentation across multiple basins. Additional policy maps in Appendix~\ref{app:additional_maps} show that the same regimes recur across learning rules, sampling schemes, and environment variants. Qualitatively, MC policies tend to exhibit weaker organization and more fragmented flow than TD policies, while the \(12{\times}12\) and bottleneck settings preserve the same structural failure modes at larger scale or under geometric constraints.

\paragraph{Policy Maps in Bottleneck-Constrained Environments}
\label{sec:bn_policy_maps}

We also visualize policy-induced graphs in an \(8{\times}8\) bottleneck environment, where walls and a narrow passage impose stronger geometric constraints than the open grid. This tests whether the same structural readout remains informative when successful trajectories must route through constrained regions.   Figure~\ref{fig:bn_policy_maps} shows representative TD policy maps. The high-success goal \((6,2)\) forms a largely coherent goal-directed basin with full local support, whereas lower-success goals such as \((5,6)\) and \((6,7)\) exhibit competing attractors, fragmented flow, and wall-hit behavior. Thus, even under constrained geometry, success and failure are explained by the same policy-induced structures: local entry into the goal, coherent routing through the state space, competition from non-goal attractors, and fragmentation across basins. The panels illustrate that the same structural regimes persist under constrained geometry: coherent goal-directed routing for high-success goals, and competing attractors, fragmented flow, or wall-hit behavior for low-success goals. Additional bottleneck examples are provided in Appendix~\ref{app:BN_maps}.

\begin{figure}[t]
	\centering
	\includegraphics[width=\linewidth]{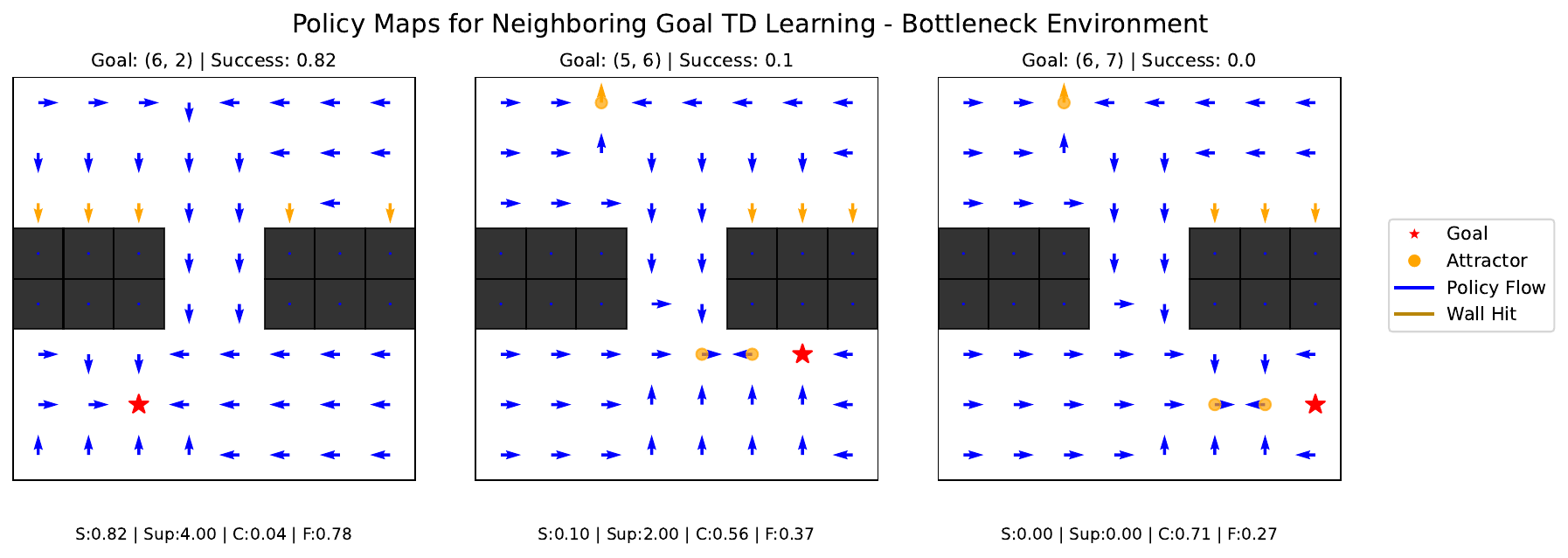}
	\caption{
		Policy maps in the \(8{\times}8\) bottleneck environment under TD learning. The red star marks the goal, orange markers indicate non-goal attractors, blue arrows show greedy policy flow, and brown arrows indicate wall-hit transitions. Text beneath each panel reports \(S\): finite-horizon success, \(\mathrm{Sup}\): local goal support count, \(C\): largest competing-basin fraction, and \(F\): fragmentation index. 
	}
	\label{fig:bn_policy_maps}
\end{figure}
\section{From Local Support to Structural Failure Modes}
\label{sec:local_to_failure}

Proposition~\ref{prop:zero_lgs} shows that zero local support exactly prevents goal entry in deterministic four-neighbor GridWorlds. We now ask how far this local diagnostic extends beyond the zero-support boundary. We first test whether weak local support identifies low-success goals across trained policies, then use global graph metrics to characterize the residual failures that retain nonzero local support.

\paragraph{Local support as a rule-based failure diagnostic.}
We use LGS directly as a diagnostic, not as a learned classifier. For each seed--goal pair, define a low-success failure as \(\mathrm{Succ}_H(g)<0.25\). For threshold \(\tau\), the rule-based prediction is
\[
\widehat{y}_{\mathrm{fail}}(g;\tau)
=
\mathbf{1}\{\mathrm{LGS}(g)\leq \tau\}.
\]
No parameters are fitted: the same threshold is applied directly to the local support value. The full threshold sweep is reported in Appendix Table~\ref{tab:app_lgs_threshold_sweep}.

\paragraph{Local support stratifies global structure.}
The fixed threshold results show that weak LGS predicts low-success goals. For the main 8$\times$8 TD setting, the rule \(\mathrm{LGS}\leq 0.5\) achieves precision 0.921, recall 0.929, and F1 0.925; across all five settings, F1 is at least 0.881 (Appendix Table~\ref{tab:app_lgs_threshold_sweep}). We next ask why this local quantity is informative by relating LGS bands to global graph structure. Across all five settings, low-support goals have low success, small goal basins, and large competing basins; full-support goals have much larger goal basins and smaller competitors.

Table~\ref{tab:lgs_stratification_ranges} summarizes this pattern by grouping exact LGS values into three bands---low support, high partial support, and full support---and reporting the range of mean outcomes across settings. The full condition-specific stratification is provided in Appendix~\ref{app:lgs_stratification}.

\begin{table}[t]
	\centering
	\footnotesize
	\setlength{\tabcolsep}{3.5pt}
	\renewcommand{\arraystretch}{0.90}
	\caption{LGS stratification across five settings. Ranges are computed across conditions after grouping exact LGS values into three support bands.}
	\label{tab:lgs_stratification_ranges}
	\begin{tabular}{lcccc}
		\toprule
		LGS band & Succ. & Fail \% & Goal basin & Comp. basin \\
		\midrule
		Low $(\leq 0.5)$        & .056--.125 & 83.4--93.9 & .063--.139 & .546--.778 \\
		Partial $(0.5,1)$       & .247--.498 & 27.1--70.4 & .252--.506 & .351--.491 \\
		Full $(=1)$             & .586--.952 & 0.7--11.3  & .593--.952 & .041--.280 \\
		\bottomrule
	\end{tabular}
\end{table}

The pattern is consistent across settings. Low-support goals have very low success, high failure rates, small goal basins, and large competing basins. Full-support goals show the reverse pattern, while high partial support lies between these regimes. Thus, increasing LGS is associated with a local-to-global transition from competitor-dominated failure toward goal-basin dominance. However, LGS is asymmetric: it identifies likely failures from local information near the target, but it does not fully characterize global success. Nonzero, and even full, local support can coexist with large competing basins or fragmented dynamics. We therefore use the full policy-induced functional graph as a post-hoc diagnostic to classify the structural form of each seed--goal outcome. This taxonomy is not a fitted predictive model; it summarizes recurring graph structures in the learned policies and separates failures due to weak local access, dominant competitors, and fragmentation.
\paragraph{Post-hoc taxonomy of structural failure modes.}
\label{sec:taxonomy}

LGS is intentionally local: it tests whether the learned policy provides immediate entry support into the goal. It does not fully determine global success, since a goal may have nonzero or even full local support while distant states are captured by non-goal attractors or split across multiple basins. We therefore use the full policy-induced functional graph as a post-hoc diagnostic of each seed--goal outcome.

The taxonomy is rule-based rather than learned. It uses the goal-basin fraction \(\beta_g\), largest competing basin \(\mathrm{Comp}(g)\), and fragmentation index \(\mathrm{Frag}(g)\) to assign each seed--goal graph to one of four regimes: goal-dominant, competitor-dominated, fragmented, or partial/contested. Specifically, a graph is classified as goal-dominant if \(\beta_g\geq 0.75\); competitor-dominated if \(\mathrm{Comp}(g)\geq 0.5\) and \(\mathrm{Comp}(g)>\beta_g\); fragmented if \(\mathrm{Frag}(g)\geq 0.3\), \(\beta_g<0.75\), and \(\mathrm{Comp}(g)<0.5\); and partial/contested otherwise.

\begin{table}[t]
	\centering
	\footnotesize
	\setlength{\tabcolsep}{3.2pt}
	\renewcommand{\arraystretch}{0.90}
	\caption{Post-hoc taxonomy of policy-induced graph structure across all seed--goal graphs.}
	\label{tab:taxonomy_summary}
	\begin{tabular}{lrrrrrrr}
		\toprule
		Regime & \(n\) & \% & Succ. & LGS & Goal & Comp. & Frag. \\
		\midrule
		Goal-dominant        & 1351 & 17.4 & .944 & .978 & .944 & .037 & .104 \\
		Partial/contested    &  308 &  4.0 & .554 & .767 & .561 & .392 & .097 \\
		Competitor-dominated & 4196 & 54.1 & .073 & .334 & .084 & .733 & .195 \\
		Fragmented           & 1905 & 24.5 & .244 & .537 & .253 & .326 & .565 \\
		\bottomrule
	\end{tabular}
\end{table}

Table~\ref{tab:taxonomy_summary} shows that the regimes separate sharply. Goal-dominant graphs have high success and large goal basins, while competitor-dominated graphs have near-zero success and large non-goal basins. Fragmented graphs have higher success than competitor-dominated graphs, but their state mass is dispersed across multiple basins rather than organized around the goal. Thus, LGS provides a local screen for likely failure, while the taxonomy explains the global form of residual failure.
\section{Discussion and Limitations}
\label{sec:discussion}

Our experiments deliberately use small deterministic sparse-reward GridWorlds. The goal is not environmental realism, but exact diagnosis: after training, a deterministic greedy goal-conditioned policy induces a successor map over states, organizing behavior into goal basins, competing attractors, and fragmented failure regions. This controlled setting lets us enumerate the full policy graph and connect aggregate success to explicit structural mechanisms.

The framework has two uses. Local goal support provides a cheap screen for likely low-performing goals without a full rollout census. The full functional-graph readout then diagnoses why failures occur, separating weak local entry from competitor dominance and fragmentation.

The main limitation is that the analysis assumes deterministic greedy evaluation, finite discrete states, and local movement geometry. Extending it to stochastic policies, continuous states, richer actions, and high-dimensional observations will require approximate graph construction or local dynamical diagnostics. A natural next step is to scale geometry in a controlled sequence---from open grids to walls, bottlenecks, multi-room layouts, and MiniGrid-style tasks---to test whether the same structural failure modes persist. 
 
 More broadly, the decomposition suggests that different failures may require different interventions: improving local entry, weakening competing attractors, or reshaping exploration and curricula to reduce fragmentation.
\bibliographystyle{plainnat}
\bibliography{refs}

%%%%%%%%%%%%%%%%%%%%%%%%%%%%%%%%%%%%%%%%%%%%%%%%%%%%%%%%%%%%

\appendix

\section{Technical appendices and supplementary material}

\appendix

\section{Detailed Experimental Setup}
\label{app:experimental_setup}

This appendix provides additional details on the environments, training configuration, curricula, and evaluation protocol used in the experiments. The main paper reports the structural diagnostics computed after training; here we specify how the trained policies and evaluation data were generated.

\subsection{Environment Details}
\label{app:environment_details}

We use finite deterministic GridWorld environments with sparse goal-reaching rewards. A state is a grid coordinate \(s=(x,y)\), and a goal is another valid coordinate \(g=(g_x,g_y)\). The valid state set \(\mathcal{S}\) contains all grid cells except obstacle cells in the bottleneck environment. Unless otherwise stated, starts and goals are sampled from valid states and evaluation excludes the trivial case \(s=g\).

The action space is four-neighbor movement:
\[
\mathcal{A}=\{\text{up},\text{down},\text{left},\text{right}\}.
\]
Each action attempts to move the agent by one grid cell in the corresponding cardinal direction. The transition function is deterministic. If the proposed next state is outside the grid boundary, or would enter an obstacle cell in the bottleneck environment, the agent remains in its current state. Thus, boundary and wall interactions can induce self-loops under particular greedy policies.

Rewards are sparse. The agent receives reward \(1\) when it reaches the goal and reward \(0\) otherwise:
\[
r(s,a,g)=\mathbf{1}\{T(s,a)=g\}.
\]
Episodes terminate when the goal is reached or when the horizon \(H\) is exhausted. We use \(H=16\) for the open \(8{\times}8\) environment and \(H=24\) for the open \(12{\times}12\) and bottleneck \(8{\times}8\) environments. The bottleneck environment uses an \(8{\times}8\) grid with obstacle cells forming a constrained passage; all structural metrics and evaluations are computed only over valid non-wall states.

\subsection{Training Configuration}
\label{app:training_config}

Agents are trained with a goal-conditioned value function \(V_\theta(s,g)\), following the UVFA formulation. The input to the network is the concatenation of the current state and goal coordinates:
\[
(s_x,s_y,g_x,g_y).
\]
The value network is a multilayer perceptron with two hidden layers of width \(128\), ReLU activations, and a scalar output:
\[
4 \rightarrow 128 \rightarrow 128 \rightarrow 1.
\]
The same architecture is used across all reported settings.

Training uses either one-step temporal-difference updates or Monte Carlo return targets. For TD learning, the update target is
\[
y_t = r_t + \gamma V_\theta(s_{t+1},g),
\]
with the terminal target equal to the observed terminal reward when the goal is reached. For MC learning, the target is the realized discounted return from the trajectory. In both cases, transitions are collected using \(\epsilon\)-greedy exploration with respect to the learned goal-conditioned value function. During action selection, the agent evaluates the successor state associated with each candidate action and selects the action with highest estimated value with probability \(1-\epsilon\), otherwise selecting a random action.

Unless otherwise specified, all experiments use \(500\) training episodes per seed, discount factor \(\gamma=0.99\), hidden dimension \(128\), replay-buffer training with minibatches, and one gradient-update phase per episode. The exploration rate decays from an initial value of \(1.0\) to a minimum value of \(0.05\). All main conditions are run over \(20\) independent random seeds. Table~\ref{tab:settings} in the main text summarizes the environment, horizon, learning rule, sampling regime, and seed count for each reported condition.

\subsection{Evaluation Protocol}
\label{app:evaluation_protocol}

After training, exploration is disabled and the learned greedy policy is evaluated deterministically. For a fixed goal \(g\), the greedy policy selects the action whose successor state has maximum learned value:
\[
\pi_g(s) \in \arg\max_{a\in\mathcal{A}} V_\theta(T(s,a),g),
\]
with ties resolved by a fixed deterministic rule. This induces the goal-indexed successor map
\[
f_g(s)=T(s,\pi_g(s)).
\]

Evaluation is a full census over valid start--goal pairs. For every valid goal \(g\in\mathcal{S}\), we roll out the greedy policy from every valid start state \(s\neq g\) for at most \(H\) steps. A rollout is counted as successful if it reaches \(g\) within the horizon:
\[
\mathrm{Succ}_H(g)
=
\frac{1}{|\mathcal{S}|-1}
\sum_{s\in\mathcal{S}\setminus\{g\}}
\mathbf{1}\{\exists t\leq H: f_g^t(s)=g\}.
\]
This produces a goal-level success rate for each trained seed and goal. Aggregate evaluation success is then computed by averaging over goals and seeds.

The same deterministic greedy map \(f_g\) is also used to construct the functional graph analyzed in the main paper. For each fixed \(g\), we enumerate \(f_g(s)\) for every valid state \(s\in\mathcal{S}\), identify attractors and basins, and compute the structural quantities in Table~\ref{tab:structural_metrics}. Thus, the rollout readout and the graph readout are computed from the same trained policy: rollouts score finite-horizon success, while the functional graph exposes the closed-loop structure that produces success or failure.

When distance-binned summaries are used, start--goal pairs are grouped by Manhattan distance
\[
d(s,g)=|s_x-g_x|+|s_y-g_y|.
\]
Distance binning is used only for descriptive analysis and does not affect training, evaluation, or the structural taxonomy.

\subsection{Compute Resources}
\label{app:compute_resources}

All experiments were run on standard Google Colab-style CPU runtime environments and did not require GPU acceleration. The environments are finite GridWorlds, the UVFA is a small three-layer MLP, and evaluation is based on deterministic rollouts plus enumeration of finite policy-induced graphs. Individual \(8{\times}8\) training runs complete in minutes on CPU; \(12{\times}12\) and bottleneck runs require longer wall-clock time because they use larger or more constrained state spaces and longer horizons. The reported experiments use 20 seeds per condition, with structural evaluation performed by enumerating all valid start--goal pairs and constructing one greedy successor map per goal. Preliminary and debugging runs required additional compute, but the final reported experiments are lightweight and reproducible on commodity CPU resources without specialized hardware.

\subsection{Configuration Details}
\label{app:configs}

All experiments used the same base training, evaluation, and analysis pipeline, with only the grid layout, horizon, update rule, and curriculum varied across conditions. The shared configuration is summarized in Table~\ref{tab:shared_config}, and the condition-specific settings are summarized in Table~\ref{tab:condition_config}. Unless otherwise stated, all environments used deterministic four-neighbor dynamics, sparse rewards, greedy evaluation, and exhaustive census evaluation over all valid start--goal pairs.

\begin{table}[t]
	\centering
	\caption{Shared experimental configuration across reported runs.}
	\label{tab:shared_config}
	\begin{tabular}{ll}
		\toprule
		Component & Configuration \\
		\midrule
		Reward & Sparse, $+1$ on reaching the goal and $0$ otherwise \\
		Discount factor & $\gamma = 0.99$ \\
		Training episodes & 500 \\
		Seeds & 20 independent random seeds \\
		Exploration & $\epsilon$-greedy \\
		Initial $\epsilon$ & 1.0 \\
		Minimum $\epsilon$ & 0.05 \\
		$\epsilon$ decay & 0.99 per episode \\
		Batch size & 128 \\
		Updates per episode & 1 \\
		UVFA hidden dimension & 128 \\
		UVFA input & Concatenated state--goal pair $(s,g)$ \\
		UVFA output & Scalar goal-conditioned value estimate \\
		Optimizer & Adam \\
		Learning rate & \(10^{-3}\) \\
		Loss & Mean-squared regression loss \\
		Evaluation mode & Exhaustive census evaluation \\
		Evaluation policy & Greedy policy induced by the learned UVFA \\
		Policy-graph analysis & Enabled for all reported runs \\
		Saved outputs & Per-goal metrics and summary CSV files \\
		\bottomrule
	\end{tabular}
\end{table}

\begin{table}[t]
	\centering
	\caption{Condition-specific configurations for the reported experiments.}
	\label{tab:condition_config}
	\begin{tabular}{lllll}
		\toprule
		Experiment ID & Grid / Layout & Horizon & Update rule & Curriculum \\
		\midrule
		\texttt{GW8\_H16\_sparse\_HD128\_E500\_TD} 
		& $8 \times 8$ open grid & 16 & TD(1) & Uniform or edge-biased \\
		
		\texttt{GW8\_H16\_sparse\_HD128\_E500\_MC} 
		& $8 \times 8$ open grid & 16 & Sparse-start MC & Uniform \\
		
		\texttt{GW12\_H24\_sparse\_HD128\_E500\_TD} 
		& $12 \times 12$ open grid & 24 & TD(1) & Uniform \\
		
		\texttt{GW8\_H24\_sparse\_HD128\_E500\_TD\_BN} 
		& $8 \times 8$ bottleneck grid & 24 & TD(1) & Uniform \\
		\bottomrule
	\end{tabular}
\end{table}
 Optimization used Adam with learning rate \(\eta=10^{-3}\) and mean-squared regression loss on TD or Monte Carlo targets with minibatches of size 128; all reported runs used one gradient-update phase per episode.
\section{Curriculum Definitions}
\label{app:curriculum_definitions}

Curricula determine how goals are sampled during training. They do not change the environment, reward function, network architecture, or evaluation protocol. In all cases, evaluation is performed by a full deterministic greedy census over all valid start--goal pairs. Thus, differences between curricula reflect differences in training exposure, not differences in evaluation distribution.

\subsection{Uniform Goal Sampling}
\label{app:uniform_curriculum}

The uniform curriculum samples each valid goal with equal probability. Let \(\mathcal{S}\) denote the set of valid states. Then
\[
p_{\mathrm{unif}}(g)=\frac{1}{|\mathcal{S}|},
\qquad g\in\mathcal{S}.
\]
For each training episode, a start state and goal state are sampled from valid states, excluding the trivial case \(s=g\). This is the baseline training distribution used in the TD-Uniform and MC-Uniform conditions.

\subsection{Edge-Biased Goal Sampling}
\label{app:edge_bias_curriculum}

The edge-biased curriculum increases the probability of sampling goals on the boundary of the grid. Let \(\mathcal{E}\subset\mathcal{S}\) be the set of valid edge states,
\[
\mathcal{E}
=
\{(x,y)\in\mathcal{S}: x=0 \text{ or } x=n-1 \text{ or } y=0 \text{ or } y=n-1\},
\]
and let \(\mathcal{I}=\mathcal{S}\setminus\mathcal{E}\) be the set of valid interior states. The curriculum samples an edge goal with probability \(\rho_{\mathrm{edge}}=0.7\) and an interior goal with probability \(1-\rho_{\mathrm{edge}}=0.3\):
\[
p_{\mathrm{edge}}(g)
=
\begin{cases}
	0.7/|\mathcal{E}|, & g\in \mathcal{E},\\[3pt]
	0.3/|\mathcal{I}|, & g\in \mathcal{I}.
\end{cases}
\]

For the open \(8{\times}8\) grid, \(|\mathcal{E}|=28\) and \(|\mathcal{I}|=36\), so each edge goal is sampled with probability \(0.7/28=0.025\), while each interior goal is sampled with probability \(0.3/36 \approx 0.0083\). Thus edge goals are sampled about three times as often as interior goals. This curriculum is used only during training; evaluation remains a full deterministic census over all valid start--goal pairs.

\paragraph{Comparison.}
Uniform sampling provides an unweighted baseline over goals. Edge-biased sampling tests whether increasing exposure to geometrically harder boundary goals improves learned goal-reaching structure. As reported in the main text, edge bias changes where coherent policy structure forms, but does not eliminate seed variability, competing attractors, or fragmented failures.

\subsection{Seed-Level Variability Under TD-Uniform}
\label{app:seed_variability}

Table~\ref{tab:td_seeds} reports seed-level results for the \(8{\times}8\) TD-uniform setting used in the motivating analysis. Although all seeds share the same environment, architecture, update rule, horizon, and goal-sampling distribution, evaluation success varies substantially. This confirms that the aggregate variability in Table~\ref{tab:main_performance} is not driven by a small number of anomalous runs, but is a persistent seed-level phenomenon.

\begin{table}[h]
	\centering
	\small
	\begin{tabular}{cccccc}
		\toprule
		Seed & Rank & Train & Last100 & Eval & KL \\
		\midrule
		16 & 1  & 0.39 & 0.53 & 0.67 & 0.032 \\
		12 & 2  & 0.39 & 0.51 & 0.65 & 0.032 \\
		2  & 3  & 0.36 & 0.48 & 0.64 & 0.025 \\
		18 & 4  & 0.35 & 0.31 & 0.59 & 0.027 \\
		6  & 5  & 0.38 & 0.49 & 0.53 & 0.029 \\
		7  & 6  & 0.46 & 0.68 & 0.53 & 0.028 \\
		8  & 7  & 0.36 & 0.45 & 0.49 & 0.031 \\
		10 & 8  & 0.42 & 0.44 & 0.48 & 0.036 \\
		0  & 9  & 0.41 & 0.48 & 0.47 & 0.034 \\
		14 & 10 & 0.37 & 0.40 & 0.46 & 0.026 \\
		1  & 11 & 0.40 & 0.50 & 0.43 & 0.045 \\
		17 & 12 & 0.31 & 0.31 & 0.40 & 0.029 \\
		5  & 13 & 0.37 & 0.48 & 0.39 & 0.024 \\
		19 & 14 & 0.35 & 0.36 & 0.39 & 0.026 \\
		15 & 15 & 0.27 & 0.28 & 0.30 & 0.025 \\
		3  & 16 & 0.30 & 0.37 & 0.30 & 0.030 \\
		9  & 17 & 0.31 & 0.36 & 0.30 & 0.029 \\
		13 & 18 & 0.35 & 0.46 & 0.27 & 0.028 \\
		4  & 19 & 0.28 & 0.32 & 0.26 & 0.025 \\
		11 & 20 & 0.41 & 0.49 & 0.26 & 0.030 \\
		\bottomrule
	\end{tabular}
	\caption{Seed-level performance for TD with uniform sampling. Seeds are ranked by evaluation success. Substantial variability is observed across seeds despite similar training statistics.}
	\label{tab:td_seeds}
\end{table}

Despite similar training performance across seeds, evaluation success exhibits substantial variability, ranging from approximately 0.26 to 0.67. 

\subsection{Distance-Based Performance Diagnostics}
\label{app:distance_diagnostics}

As a simple geometric baseline, we also examine success as a function of shortest-path distance between start and goal. Start--goal pairs are grouped into short, medium, and long-distance buckets. Figure~\ref{fig:distance} reports mean success with variability across seeds.

As expected, success generally decreases with distance, and variability is larger for longer-distance pairs. However, distance does not explain the main structural phenomena studied in the paper. Goals at similar geometric difficulty can induce very different policy graphs, including goal-dominant basins, competing non-goal attractors, and fragmented basin decompositions. Thus, distance is a useful baseline descriptor of task difficulty, but it does not replace the policy-induced structural diagnostics developed in the main text.

\begin{figure}[h]
	\centering
	\includegraphics[width=0.7\linewidth]{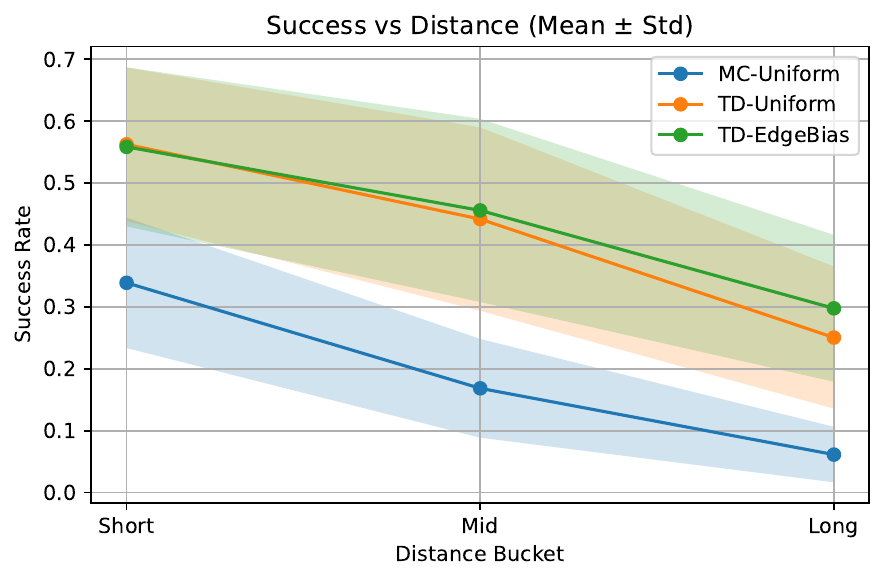}
	\caption{Success rates as a function of goal distance (mean \(\pm\) standard deviation across seeds). Performance decreases with distance for all methods, and variability increases for longer distances. Edge-biased training yields modest improvements at longer distances but does not reduce variability.}
	\label{fig:distance}
\end{figure}

Performance decreases monotonically with distance for all methods, reflecting the increased difficulty of long-horizon navigation. In addition, variability across seeds increases with distance, particularly for long-range goals. While edge-biased training yields modest improvements at longer distances, it does not reduce variability or fundamentally alter overall performance trends. These results reinforce that both difficulty and instability are structured properties of the environment and learned policy.

\subsection{Edge and Interior Goal Performance}
\label{app:edge_interior_performance}

The edge-biased curriculum increases the training probability of boundary goals from their uniform mass of \(28/64=0.4375\) to \(0.7\). To examine whether this additional exposure improves edge-goal performance, we separately evaluate success on edge and interior goals in the open \(8{\times}8\) environment.

\begin{table}[h]
	\centering
	\small
	\caption{Edge versus interior goal success rates in the open \(8{\times}8\) environment. Edge-biased training slightly improves edge-goal success relative to TD-Uniform, but reduces interior-goal success, yielding little net improvement in overall evaluation.}
	\label{tab:edge_interior_success}
	\begin{tabular}{lcc}
		\toprule
		Method & Edge success & Interior success \\
		\midrule
		MC-Uniform    & 0.20 & 0.19 \\
		TD-Uniform    & 0.31 & 0.40 \\
		TD-EdgeBias   & 0.34 & 0.37 \\
		\bottomrule
	\end{tabular}
\end{table}

Table~\ref{tab:edge_interior_success} shows that edge-biased sampling has the intended local effect: edge-goal success increases from \(0.31\) under TD-Uniform to \(0.34\) under TD-EdgeBias. However, interior-goal success decreases from \(0.40\) to \(0.37\). Thus, the curriculum shifts competence toward the emphasized edge region rather than improving the global organization of the learned policy. This supports the main analysis: simple redistribution of goal exposure does not remove competing attractors, fragmented basins, or seed-level variability.

\section{Structural Properties of Finite Policy Maps}
\label{app:finite_maps}

This appendix records the elementary finite-map property used in the main text. For a fixed goal \(g\), greedy evaluation induces the deterministic state map
\[
f_g:\mathcal{S}\to\mathcal{S},
\qquad
f_g(s)=T(s,\pi_g(s)),
\]
where \(\mathcal{S}\) is finite. Equivalently, \(f_g\) defines a directed graph
\[
G_g=(\mathcal{S},E_g),
\qquad
E_g=\{(s,f_g(s)):s\in\mathcal{S}\}.
\]
Every state therefore has exactly one outgoing edge. Such graphs are functional graphs, or mapping digraphs.

\paragraph{Proposition A.1.}
Let \(\mathcal{S}\) be a finite set and let \(f:\mathcal{S}\to\mathcal{S}\) be a deterministic map. The directed graph with vertices \(\mathcal{S}\) and edges \(s\to f(s)\) decomposes into weakly connected components, each containing exactly one directed cycle. Every state not on the cycle has a unique forward path that eventually enters that cycle.

\paragraph{Proof.}
Fix \(s_0\in\mathcal{S}\) and consider its forward orbit
\[
s_0,\ f(s_0),\ f^2(s_0),\ldots .
\]
Since \(\mathcal{S}\) is finite, some state must repeat: there exist \(0\leq i<j\) such that
\[
f^i(s_0)=f^j(s_0).
\]
The repeated segment
\[
f^i(s_0),\ f^{i+1}(s_0),\ldots,\ f^{j-1}(s_0)
\]
therefore forms a directed cycle. Thus every forward orbit eventually enters a cycle.

It remains to show that each weakly connected component contains exactly one cycle. Each state has a unique forward orbit, and hence a unique eventual cycle. Suppose a weakly connected component contained two distinct cycles. Then there would be an undirected path connecting a state whose forward orbit enters one cycle to a state whose forward orbit enters the other. Along this path, there must be two adjacent states \(u\) and \(v\) with different eventual cycles. But the edge between adjacent states is either \(u\to v\) or \(v\to u\). If \(u\to v\), then the forward orbit of \(u\) immediately follows that of \(v\), so \(u\) and \(v\) must have the same eventual cycle, a contradiction. The case \(v\to u\) is identical. Hence a weakly connected component cannot contain two distinct cycles.

Therefore each component contains exactly one directed cycle, and all remaining states in that component have unique forward paths feeding into it. \(\square\)

\paragraph{Interpretation for learned greedy policies.}
Applied to the policy-induced map \(f_g\), the unique directed cycle in each component is an attractor under greedy execution. A self-loop at the goal is the desired goal attractor; a self-loop away from the goal is a spurious fixed point; and a cycle of length greater than one is a non-goal recurrent loop. The states whose forward paths enter an attractor form its basin of attraction. Thus, for each goal \(g\), the learned greedy policy partitions the finite state space into attractor--basin components.
A standard property of finite functional graphs is that each weakly connected component contains exactly one directed cycle, with all remaining states feeding into that cycle. Appendix~\ref{app:finite_maps} recalls this property and explains its interpretation for learned greedy policies.

\subsection{Asymmetry of local support and global success}
\label{app:lgs_asymmetry}

\paragraph{Proposition A.2.}
\textit{Local support is necessary for entry but not sufficient for global success.}
Consider a deterministic GridWorld with local four-neighbor moves and a deterministic goal-indexed successor map \(f_g:\mathcal{S}\to\mathcal{S}\). Let \(N(g)\) be the set of valid one-step neighbors of goal \(g\). For horizons \(H\geq 1\), the following hold.

\begin{enumerate}
	\item If \(\mathrm{LGS}(g)=0\), then \(\mathrm{Succ}_H(g)=0\).
	\item If \(\mathrm{LGS}(g)>0\), then \(\mathrm{Succ}_H(g)>0\) for \(H\geq 1\), but this does not imply high success.
	\item Even \(\mathrm{LGS}(g)=1\) does not guarantee \(\mathrm{Succ}_H(g)=1\).
\end{enumerate}

\paragraph{Proof.}
The first statement follows from Proposition~\ref{prop:zero_lgs}: any trajectory reaching \(g\) from a non-goal start must enter \(g\) from some valid neighbor \(s\in N(g)\). If no neighbor maps to \(g\), such an entry transition is impossible.

For the second statement, if \(\mathrm{LGS}(g)>0\), then there exists at least one neighbor \(s\in N(g)\) such that \(f_g(s)=g\). Since evaluation includes non-goal starts and \(s\neq g\), starting from \(s\) reaches \(g\) in one step. Hence \(\mathrm{Succ}_H(g)>0\) for any \(H\geq 1\). However, this only guarantees that at least one start reaches the goal; it does not constrain the trajectories of other states.

For the third statement, construct a policy-induced graph in which every valid neighbor of \(g\) maps directly to \(g\), so \(\mathrm{LGS}(g)=1\), but all states outside this local neighborhood map to a non-goal fixed point or cycle. Then the goal has full local support, but most starts are captured by a competing attractor and never reach \(g\). Therefore full local support does not imply full finite-horizon success.

\section{Additional Definitions for Policy-Induced Functional Graphs}
\label{app:definitions}

For a fixed goal \(g\), let \(f_g:\mathcal{S}\to\mathcal{S}\) denote the deterministic state map induced by greedy evaluation. An attractor of \(f_g\) is a recurrent set \(A\subseteq\mathcal{S}\) such that trajectories that enter \(A\) remain in \(A\). In the finite deterministic setting considered here, attractors correspond to directed cycles in the functional graph of \(f_g\), including cycles of length one.

\paragraph{Goal attractor.}
For graph construction, we treat the goal as absorbing:
\[
f_g(g)=g.
\]
The goal attractor is the one-state recurrent set
\[
A_g^{\mathrm{goal}}=\{g\}.
\]
A state belongs to the goal basin if its greedy trajectory eventually reaches \(g\).

\paragraph{Spurious fixed point.}
A spurious fixed point is a non-goal state \(s^\star\neq g\) that maps to itself:
\[
f_g(s^\star)=s^\star.
\]
It defines the one-state non-goal attractor
\[
A^{\mathrm{fp}}=\{s^\star\}.
\]
States in its basin are failures under goal \(g\), because their greedy trajectories become trapped at \(s^\star\) rather than reaching the goal.

\paragraph{Non-goal cycle.}
A non-goal cycle of length \(k>1\) is a set of distinct states
\[
A^{\mathrm{cyc}}=\{s_1,\ldots,s_k\}
\]
such that
\[
f_g(s_i)=s_{i+1}\quad \text{for } i=1,\ldots,k-1,
\qquad
f_g(s_k)=s_1,
\]
and \(g\notin A^{\mathrm{cyc}}\). States in its basin are failures under goal \(g\), because their greedy trajectories loop without reaching the goal.

\paragraph{Basin of attraction.}
For any attractor \(A\), its basin under goal \(g\) is
\[
\mathcal{B}_g(A)
=
\{s\in\mathcal{S}: \exists t\geq 0 \text{ such that } f_g^t(s)\in A\}.
\]
Equivalently, \(\mathcal{B}_g(A)\) is the set of states whose greedy trajectories eventually enter \(A\). Because \(f_g\) is deterministic and \(\mathcal{S}\) is finite, each state belongs to exactly one attractor basin.

\subsection{Full Structural Metric Set}

\label{app:full_structural_metrics}

Table~\ref{tab:full_structural_metrics} summarizes the full set of structural quantities computed from the attractor--basin decomposition. The main paper focuses on local goal support, largest competing basin, and fragmentation; the remaining quantities are used for descriptive analysis and robustness checks.

\begin{table}[h]
	\centering
	\small
	\caption{Full set of structural metrics computed from the goal-indexed functional graph. Here \(\mathcal{A}_g\) denotes the set of attractors under \(f_g\), \(\mathcal{B}_g(A)\) is the basin of attractor \(A\), and \(p_A=|\mathcal{B}_g(A)|/|\mathcal{S}|\).}
	\label{tab:full_structural_metrics}
	\begin{tabular}{lll}
		\toprule
		Metric & Definition & Interpretation \\
		\midrule
		
		Goal basin fraction &
		\(\displaystyle \beta_g=\frac{|\mathcal{B}_g(g)|}{|\mathcal{S}|}\) &
		States eventually reaching the goal \\[6pt]
		
		Finite-horizon success &
		\(\displaystyle \mathrm{Succ}_H(g)=\frac{|\mathcal{B}_{g,H}\setminus\{g\}|}{|\mathcal{S}|-1}\) &
		Starts reaching \(g\) within \(H\) \\[6pt]
		
		Failure basin fraction &
		\(\displaystyle \beta_{\mathrm{fail}}=1-\beta_g\) &
		States reaching non-goal attractors \\[6pt]
		
		Largest competing basin &
		\(\displaystyle \beta_{\max}=\max_{A\in\mathcal{A}_g,\,A\neq g}\frac{|\mathcal{B}_g(A)|}{|\mathcal{S}|}\) &
		Dominant non-goal failure basin \\[8pt]
		
		Goal dominance margin &
		\(\displaystyle \Delta_g=\beta_g-\beta_{\max}\) &
		Goal basin advantage over strongest competitor \\[6pt]
		
		Failure concentration &
		\(\displaystyle C_{\mathrm{fail}}=\frac{\beta_{\max}}{\beta_{\mathrm{fail}}}\) &
		Whether failure is concentrated in one basin \\[8pt]
		
		Cycle basin fraction &
		\(\displaystyle \beta_{\mathrm{cyc}}=\sum_{A\in\mathcal{A}_{\mathrm{cyc}}}\frac{|\mathcal{B}_g(A)|}{|\mathcal{S}|}\) &
		Failure mass captured by cycles \\[8pt]
		
		Spurious fixed-point fraction &
		\(\displaystyle \beta_{\mathrm{fp}}=\sum_{A\in\mathcal{A}_{\mathrm{fp}}}\frac{|\mathcal{B}_g(A)|}{|\mathcal{S}|}\) &
		Failure mass captured by non-goal fixed points \\[8pt]
		
		Number of attractors &
		\(\displaystyle |\mathcal{A}_g|\) &
		Structural complexity of policy graph \\[6pt]
		
		Fragmentation &
		\(\displaystyle F=1-\sum_{A\in\mathcal{A}_g}p_A^2\) &
		Dispersion across attractor basins \\[8pt]
		
		Local goal support &
		\(\displaystyle \mathrm{LGS}(g)=\frac{1}{|N(g)|}\sum_{s\in N(g)}\mathbf{1}\{f_g(s)=g\}\) &
		Immediate inward support near goal \\[8pt]
		
		Mean time to attractor &
		\(\displaystyle \frac{1}{|\mathcal{S}|}\sum_{s\in\mathcal{S}}\tau_A(s)\) &
		Average transient length before recurrence \\[8pt]
		
		Mean time to goal &
		\(\displaystyle \frac{1}{|\mathcal{B}_g(g)|}\sum_{s\in\mathcal{B}_g(g)}\tau_g(s)\) &
		Average hitting time among successful states \\
		
		\bottomrule
	\end{tabular}
\end{table}

Note: When the rollout horizon is long enough to reach the goal from all states in the goal basin, \(\beta_g\) coincides with finite-horizon success; otherwise success is the horizon-truncated goal basin fraction.

\subsection{Finite-Horizon Success versus Eventual Basins}
\label{app:finite_horizon}

For a fixed learned greedy policy, the functional graph describes the full closed-loop structure induced by \(f_g\), independent of the rollout horizon used to score evaluation. Evaluation, however, is horizon-limited. We therefore distinguish the eventual goal basin,
\[
\mathcal{B}_g(\{g\})
=
\{s\in\mathcal{S}: \exists t\geq 0 \text{ such that } f_g^t(s)=g\},
\]
from the horizon-truncated success set,
\[
\mathcal{B}_{g,H}
=
\{s\in\mathcal{S}\setminus\{g\}: \exists t\leq H \text{ such that } f_g^t(s)=g\}.
\]
Since \(\mathcal{B}_{g,H}\subseteq \mathcal{B}_g(\{g\})\), finite horizons can classify as failures states that would eventually reach the goal only after more than \(H\) steps. In our experiments, the same horizon used during training is also used for evaluation scoring, so reported success is computed from \(\mathcal{B}_{g,H}\). Attractors and basins, by contrast, describe the full induced policy structure once the learned greedy policy is fixed. The distinction between horizon-truncated success and eventual goal basins is detailed in Appendix~\ref{app:finite_horizon}.

\section{Implementation Details}
\label{app:implementation_details}

\subsection{Attractor and Basin Computation}

For each trained seed and evaluation goal \(g\), we construct the deterministic successor map \(f_g\) over all valid states. In open grids, the valid state set is the full grid. In bottleneck environments, wall cells are excluded from the valid state set. For each valid state \(s\), we evaluate the greedy action selected by the learned goal-conditioned policy and record the resulting successor \(f_g(s)\).

Since \(f_g\) is deterministic and the state space is finite, repeated application of \(f_g\) from any state eventually reaches an attractor. Attractors may be:
\begin{itemize}
	\item the goal fixed point \(\{g\}\),
	\item a non-goal spurious fixed point \(\{s^\star\}\), or
	\item a non-goal cycle of length greater than one.
\end{itemize}

We identify attractors by forward iteration. Starting from each state, we repeatedly apply \(f_g\) until either a previously assigned state is reached or a repeated state appears within the current trajectory. A repeated state identifies a cycle; all states preceding the cycle are assigned to the basin of that cycle. If the cycle is the goal self-loop, these states belong to the goal basin; otherwise, they belong to a non-goal failure basin.

This procedure assigns every valid state to exactly one attractor basin, yielding the partition
\[
\mathcal{S}
=
\bigsqcup_{A\in\mathcal{A}_g}\mathcal{B}_g(A),
\]
where \(\mathcal{A}_g\) is the set of attractors under \(f_g\). We also compute the transient distance from each state to its attractor, defined as the number of applications of \(f_g\) required to first enter the attractor.
\section{Policy Maps and Structural Visualization}
\label{app:policy_maps}

To better understand variability in goal-reaching behavior, we analyze the structure of the learned policy over the state space. For a fixed goal, we represent the learned greedy policy as a \emph{policy map}, which assigns to each state its greedy action or, equivalently, its next state under the induced map \(f_g\). This yields a vector field over the state space.

Policy maps provide a direct visualization of the induced functional graph. They reveal how trajectories organize globally, including coherent goal-directed flow, competing attractors, basin boundaries, and fragmented regions. These structural properties help explain differences in evaluation performance across seeds and goals that are not apparent from aggregate metrics alone.

\subsection{Detailed Interpretation of Policy-Map Examples}

\paragraph{Goal success via basin formation.}
Figure~\ref{fig:policy_maps} shows that for goal $(3,4)$ (left), the policy induces a single dominant basin of attraction centered at the goal. From nearly every state, trajectories align with a coherent flow that leads toward the goal. Crucially, the goal has strong \emph{local support}: all neighboring states transition directly into the goal. This creates a stable terminal attractor. As a result, trajectories from across the grid join a global flow that reliably converges to the goal, yielding perfect success.

\paragraph{Competing attractors and partial failure.}
For the nearby goal $(4,3)$ (middle), the structure changes qualitatively. A cyclic attractor emerges in the upper region of the grid, with a substantial basin of attraction that captures a large fraction of trajectories. This competing structure diverts trajectories away from the goal. Additionally, the goal exhibits weak local support—only a subset of neighboring states lead into it—limiting its ability to capture nearby trajectories. Consequently, only a restricted region of the state space (primarily the lower-left) successfully reaches the goal, leading to partial success.

\paragraph{Fragmentation and complete failure.}
For goal $(7,3)$ (right), the policy exhibits a highly fragmented structure. Multiple competing attractors partition the state space into disjoint regions, none of which are dominated by the goal. The goal itself fails to form a meaningful basin of attraction and lacks local support. As a result, trajectories are trapped in alternative attractors, and the probability of reaching the goal collapses to zero.

\section{Additional policy maps}
\label{app:additional_maps}
\paragraph{Additional examples under TD learning.}
Figure~\ref{fig:td_policy_maps_additional} provides further examples of policy maps under TD learning. While all three goals achieve moderate to high success, their underlying structures differ in important ways.

The goal $(5,3)$ exhibits a relatively coherent basin of attraction with strong local support, allowing trajectories from most regions to converge reliably. In contrast, the goal $(6,4)$ shows partial competition from a nearby attractor, reducing the effective basin size and limiting success. The goal $(7,6)$, despite high success, demonstrates mild fragmentation, with small regions of the state space diverted toward alternative attractors.

These examples highlight that even within a single learning regime, performance is not determined solely by the presence of a goal-directed flow, but by the balance between basin formation, competing structures, and fragmentation. This reinforces the need for quantitative metrics to systematically capture these structural properties.

\begin{figure}[t]
	\centering
	\includegraphics[width=0.95\linewidth]{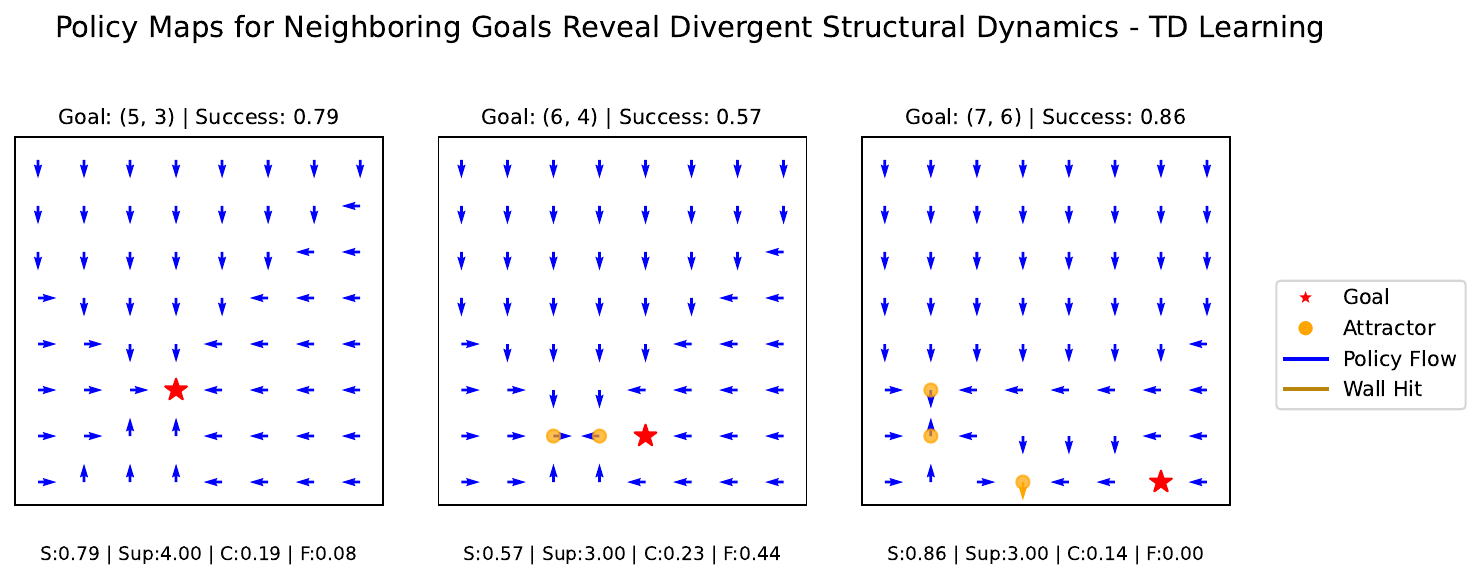}
	\caption{
		Additional policy maps under TD learning illustrating variability within the same learning regime. 
		Even among relatively successful goals, differences in local support, competing attractors, and fragmentation are evident. 
		While some goals exhibit well-formed basins of attraction, others show partial competition or mild fragmentation, leading to reduced success. 
		These examples highlight that performance variability persists within TD learning and is governed by structural differences in the induced policy dynamics.
	}
	\label{fig:td_policy_maps_additional}
\end{figure}
\paragraph{Policy structure under Monte Carlo learning.}
Figure~\ref{fig:mc_policy_maps} illustrates policy maps obtained under Monte Carlo (MC) training for three representative goals. While the same qualitative structural elements—attractors, basins, and competing dynamics—are present, the overall organization of the policy is noticeably weaker compared to TD learning.

Even for the high-performing goal $(6,3)$, the basin of attraction is less coherent and local support is incomplete, with not all neighboring states directing into the goal. As a result, trajectories from parts of the state space fail to converge reliably. 

For the medium- and low-performing goals, competing attractors and cyclic structures dominate larger portions of the grid. These structures capture trajectories away from the goal, reducing the effective basin size. In the lowest-performing case, the state space is highly fragmented, with multiple competing regions and no dominant goal-directed flow.

These patterns suggest that MC learning, due to delayed credit assignment, is less effective at propagating value information across the state space. This results in weaker basin formation, reduced local support, and increased fragmentation, providing a structural explanation for the lower success rates observed under MC training.

\begin{figure}[t]
	\centering
	\includegraphics[width=0.95\linewidth]{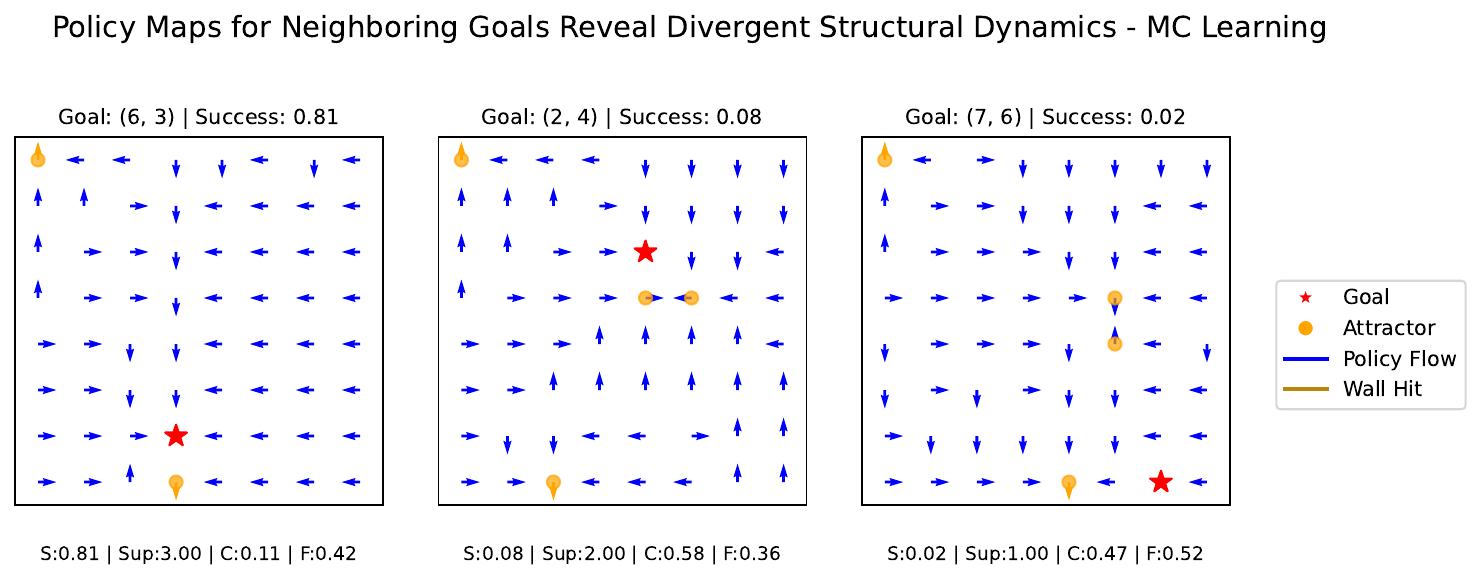}
	\caption{
		Policy maps under Monte Carlo (MC) learning for three representative goals (high, medium, and low success). 
		Compared to TD learning, MC policies exhibit weaker basin formation and reduced local support even for high-performing goals. 
		Competing attractors and cyclic structures are more prevalent, leading to fragmented dynamics and lower overall success rates.
	}
	\label{fig:mc_policy_maps}
\end{figure}

\paragraph{Policy Maps Under Edge Bias Curriculum Sampling}
\label{app:edgebias_maps}

We next examine whether curriculum-based sampling alters the structural dynamics of the learned policy. Figure~\ref{fig:policy_maps_edgebias} shows representative policy maps under an edge-biased curriculum. While the curriculum increases exposure to edge goals, the resulting policies continue to exhibit the same structural patterns observed under uniform sampling.

In particular, successful goals still correspond to the formation of strong basins of attraction with high local support, while unsuccessful goals are characterized by competing attractors and fragmentation. Notably, the curriculum does not uniformly improve performance across edge goals: some goals become highly optimized, while others remain poorly structured. This indicates that curriculum primarily redistributes where structure forms, rather than fundamentally changing the underlying mechanisms governing success. This suggests that curriculum interventions alone are insufficient to overcome structural limitations in sparse goal-conditioned learning.

\begin{figure}[t]
	\centering
	\includegraphics[width=0.95\linewidth]{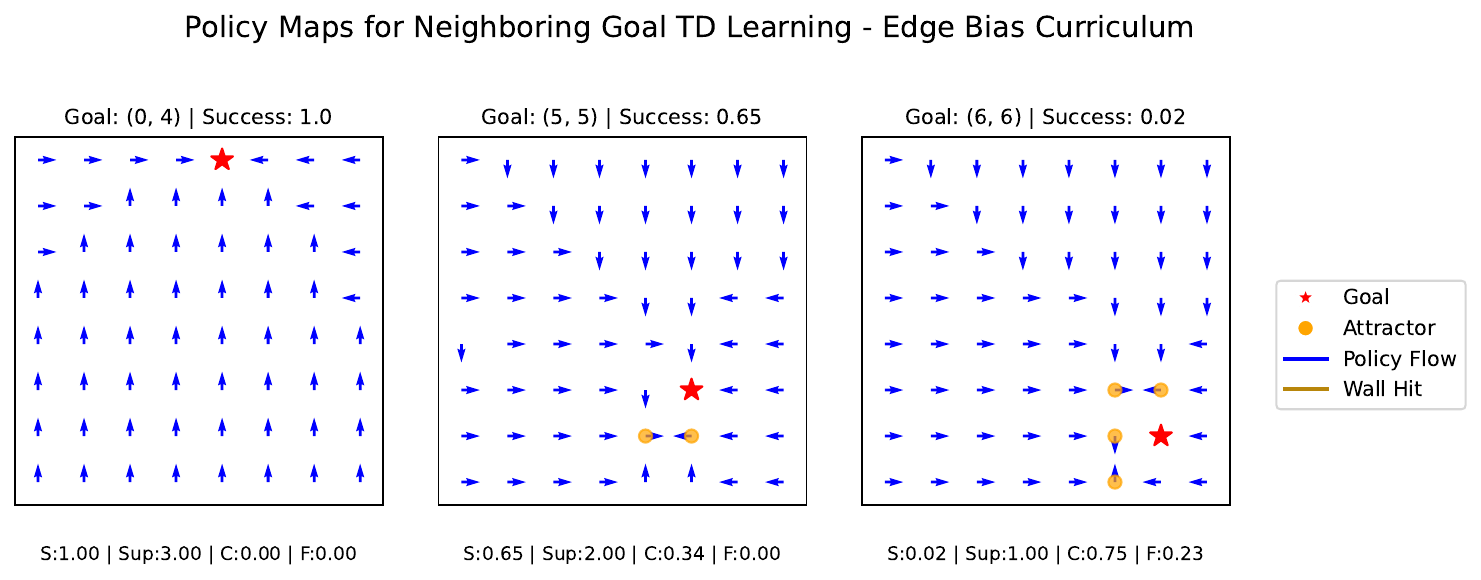}
	\caption{
		Policy maps under TD learning with edge-biased curriculum sampling. 
		Despite preferential sampling of edge goals, structural variability persists. 
		The goal $(0,4)$ (left) forms a strong basin of attraction with full local support, yielding perfect success. 
		The goal $(5,5)$ (middle) exhibits partial competition from a nearby attractor, reducing its effective basin. 
		The goal $(6,6)$ (right) remains fragmented, with competing attractors preventing goal convergence. 
		These examples show that curriculum shifts the distribution of learning but does not fundamentally alter the structural determinants of performance.
	}
	\label{fig:policy_maps_edgebias}
	\end{figure}
		
\section{Policy Maps in Extended Environments}

\subsection{Policy Maps in 12 x 12 Grid Environment}
\label{app:policy_maps_12x12}

\paragraph{Policy Maps - 12x12 grid - TD Learning}
We further analyze policy structure in a $12 \times 12$ GridWorld to assess whether the observed dynamics extend beyond the base setting. Figure~\ref{fig:policy_12x12} presents policy maps for three representative goals using TD learning

The successful goal (left) exhibits a large, coherent basin of attraction, with policy arrows from most states aligning toward the goal. Local support is strong, with all neighboring states directing flow into the goal, resulting in stable convergence.

The intermediate goal (center) achieves partial success. While a portion of the state space directs flow toward the goal, a competing attractor captures a significant region, particularly in the lower portion of the grid. This competing structure diverts trajectories away from the goal, limiting its effective basin of attraction.

The unsuccessful goal (right) displays a highly fragmented structure, with multiple competing attractors and minimal local support. The state space is partitioned into several regions that do not lead to the goal, resulting in negligible success despite the deterministic policy.

These observations mirror the patterns seen in the $8 \times 8$ setting, indicating that basin formation, competition, and fragmentation remain the key determinants of performance even as the state space grows. The persistence of these structures suggests that the relationship between policy dynamics and goal success is not an artifact of small environments, but reflects a more general property of goal-conditioned learning.

\begin{figure}[t]
	\centering
	\includegraphics[width=0.95\linewidth]{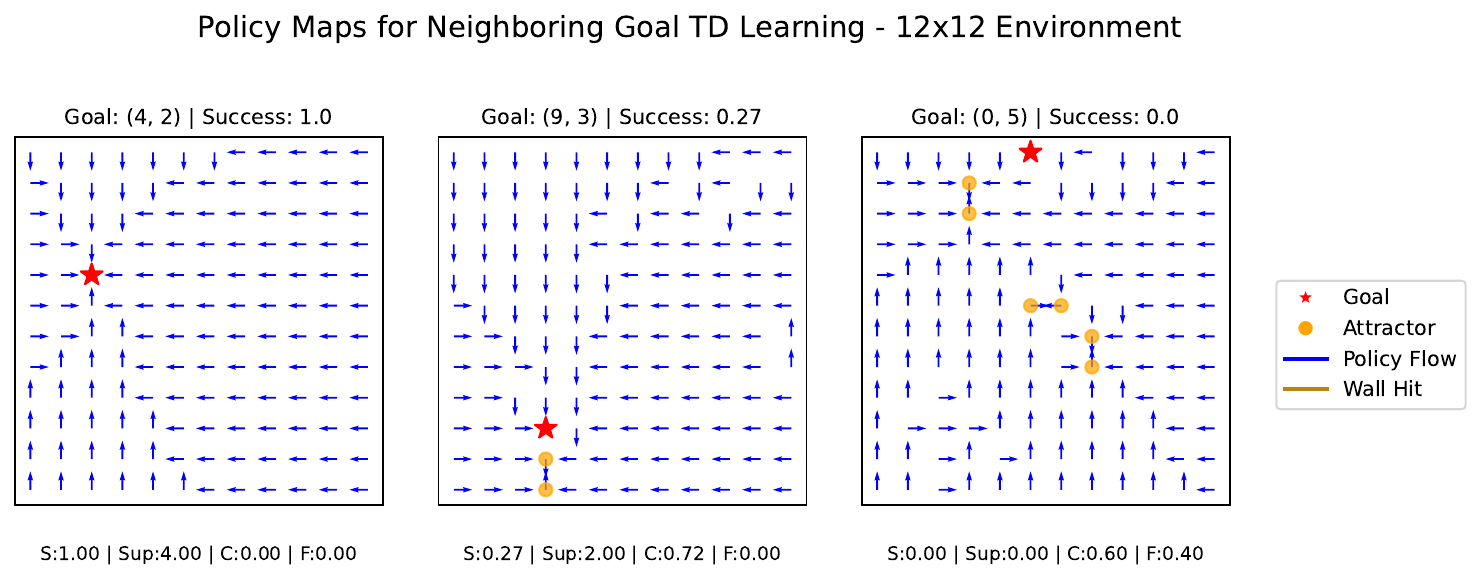}
	\caption{
		Policy maps for neighboring goals in a $12 \times 12$ environment under TD learning. 
		The left goal exhibits a coherent basin of attraction with full local support and achieves perfect success. 
		The middle goal shows partial success due to a competing attractor that captures a large portion of the state space. 
		The right goal exhibits fragmented dynamics with multiple competing attractors and weak local support, leading to failure.
	}
	\label{fig:policy_12x12}
\end{figure}

\paragraph{Monte Carlo dynamics in 12x12 grid}
In the $12 \times 12$ setting, Monte Carlo learning exhibits increased fragmentation relative to TD. Instead of forming a single dominant competing basin, the state space is partitioned into multiple smaller attractors, resulting in diffuse and less predictable trajectories. This leads to weaker basin formation even for moderately successful goals and highlights the sensitivity of MC learning to environment scale.

\begin{figure}[t]
	\centering
	\includegraphics[width=0.95\linewidth]{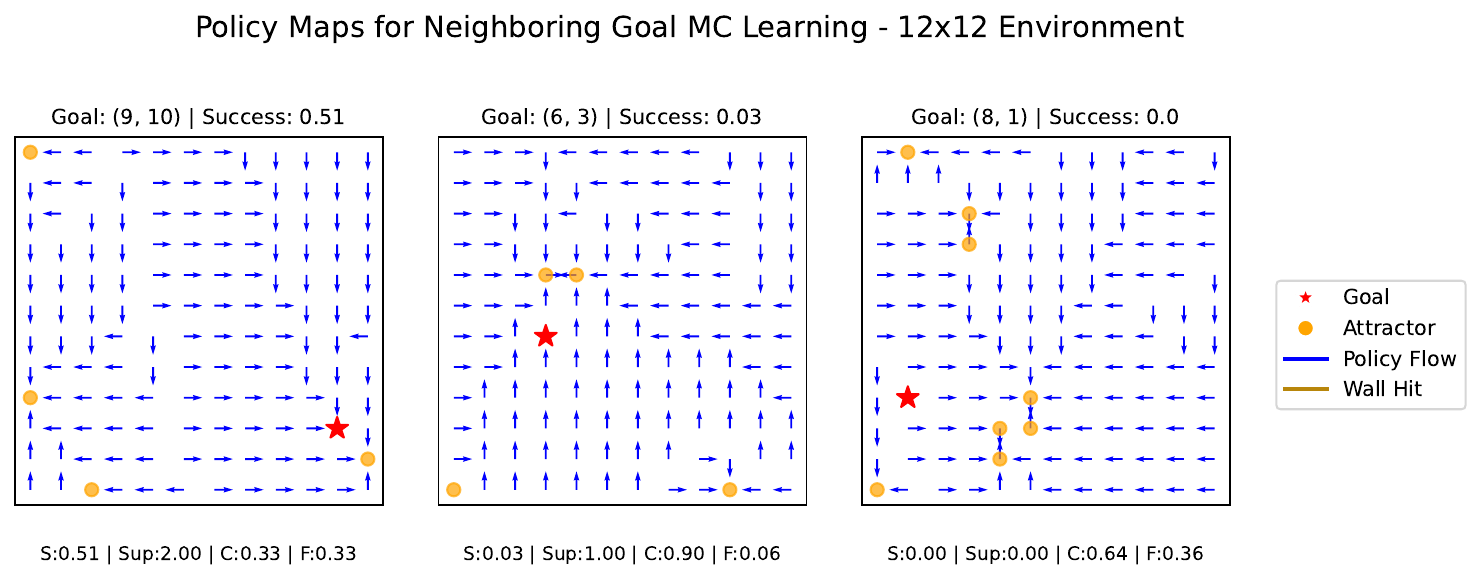}
	\caption{
		Policy maps for neighboring goals under Monte Carlo (MC) learning in a $12 \times 12$ environment. 
		Compared to TD learning, MC policies exhibit more fragmented dynamics, with multiple smaller attractors and weaker basin formation. 
		Successful goals show partial and less coherent basins, while failures arise from diffuse fragmentation rather than a single dominant competing basin.
	}
	\label{fig:policy_12x12_mc}
\end{figure}

\subsection{Policy Maps in BN Grid Environment}
\label{app:BN_maps}

Figure~\ref{fig:bn_policy_maps_appendix} provides additional policy-map examples from the \(8{\times}8\) bottleneck environment. The bottleneck setting introduces walls and a narrow passage, so successful goal-reaching requires the learned policy to route trajectories around blocked regions rather than relying on the more symmetric connectivity of the open grid.

The left panel shows a high-success goal. Although the fragmentation index is nonzero, the learned policy routes most states toward the lower part of the grid and into the goal region, yielding high finite-horizon success. The middle panel shows a low-success goal located near the bottleneck: despite nonzero local support, trajectories are diverted into nearby non-goal attractors and wall-hit behavior, producing a small effective goal basin. The right panel shows a zero-success goal with no local support; trajectories are captured by non-goal attractors and wall-induced flow patterns rather than entering the goal. These examples illustrate that the same structural ingredients observed in open grids---local support, competing attractors, wall-induced fixed points, and fragmentation---remain informative when the geometry is constrained.

\begin{figure}[t]
	\centering
	\includegraphics[width=0.95\linewidth]{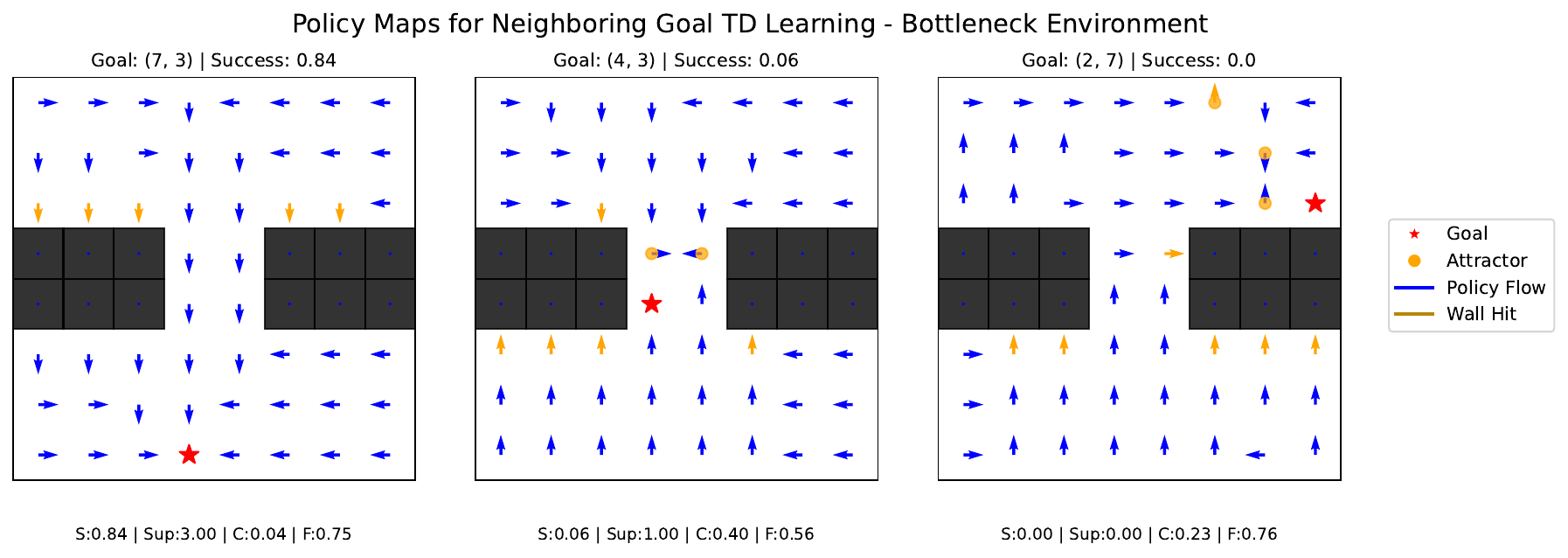}
	\caption{
		Additional policy maps in the \(8{\times}8\) bottleneck environment under TD learning. 
		The red star marks the goal, orange markers indicate non-goal attractors, blue arrows show greedy policy flow, and brown arrows indicate wall-hit transitions. 
		Text below each panel reports \(S\): finite-horizon success, \(\mathrm{Sup}\): local goal support count, \(C\): largest competing-basin fraction, and \(F\): fragmentation index. 
		The examples show that constrained geometry introduces wall-hit behavior and bottleneck routing failures, but the policy-induced graph still separates high-success, low-success, and zero-success goals through the same attractor--basin structure.
	}
	\label{fig:bn_policy_maps_appendix}
\end{figure}

\section{Local Support Diagnostics}
\label{app:lgs_diagnostics}

This appendix provides additional diagnostics for local goal support (LGS). We first report ranking diagnostics based on AUC and Spearman correlation, then analyze a deterministic threshold rule for classifying low-success goals, and finally show how LGS levels stratify global policy-graph structure.

\subsection{Continuous LGS Ranking Diagnostics}
\label{app:lgs_continuous_diagnostics}

We first ask whether LGS provides a useful ranking of goals within each trained policy. Table~\ref{tab:app_lgs_per_seed_robustness} reports two complementary diagnostics computed separately for each seed and then summarized across seeds. AUC measures how well \(-\mathrm{LGS}\) ranks low-success goals ahead of non-failing goals; a value of \(1\) indicates perfect ranking and \(0.5\) corresponds to random ranking. Spearman's \(\rho\) measures the rank correlation between LGS and finite-horizon goal success within a seed. High positive values indicate that goals with larger local support tend to have higher success.

The mean values summarize the typical within-seed diagnostic strength, while the minimum values report the weakest seed-level case. This guards against the possibility that the pooled seed--goal result is driven only by a subset of well-behaved trained policies.

\begin{table}[t]
	\centering
	\small
	\caption{Per-seed robustness of the local goal support diagnostic. AUC and Spearman correlation are computed separately within each trained seed, then summarized across seeds for each condition.}
	\label{tab:app_lgs_per_seed_robustness}
	\begin{tabular}{lrrrr}
		\toprule
		Condition & Mean AUC & Min AUC & Mean $\rho$ & Min $\rho$ \\
		\midrule
		$8{\times}8$ TD uniform     & 0.965 & 0.921 & 0.911 & 0.845 \\
		$8{\times}8$ MC uniform     & 0.908 & 0.801 & 0.871 & 0.634 \\
		$8{\times}8$ TD edge        & 0.946 & 0.828 & 0.923 & 0.742 \\
		$8{\times}8$ BN TD uniform  & 0.963 & 0.889 & 0.890 & 0.676 \\
		$12{\times}12$ TD uniform   & 0.950 & 0.899 & 0.870 & 0.714 \\
		\bottomrule
	\end{tabular}
\end{table}

Across all settings, LGS remains a strong within-policy diagnostic. Even the weakest seed-level AUC remains well above random ranking, and the mean AUC is above \(0.90\) in every condition. Thus, the relationship between LGS and goal success is not only a pooled population effect across seed--goal rows; it is also visible within individual trained policies.

\subsection{Threshold Sweep for Rule-Based Failure Classification}
\label{app:lgs_threshold_sweep}

We next evaluate LGS as a deterministic failure rule. This is not a learned classifier. For a seed--goal pair, let \(y_{\mathrm{fail}}=1\) denote a low-success goal, defined as
\[
\mathrm{Succ}_H(g)<0.25.
\]
For a threshold \(\tau\), define the rule-based prediction
\[
\widehat{y}_{\mathrm{fail}}(g;\tau)
=
\mathbf{1}\{\mathrm{LGS}(g)\leq \tau\}.
\]
Thus, a goal is predicted to fail if its local goal support is at or below the threshold. No parameters are fitted; the sweep simply evaluates the same deterministic diagnostic at different operating points.

The threshold \(\tau\) is varied over the discrete LGS values induced by local neighborhood geometry. At \(\tau=0\), the rule predicts failure only for goals with zero local support. This is highly conservative: it has perfect precision in all tested settings, but low recall because many failures have weak nonzero support. Increasing \(\tau\) labels more goals as predicted failures, which typically increases recall while reducing precision. Table~\ref{tab:app_lgs_threshold_sweep} reports this precision--recall tradeoff. The main text uses the fixed threshold \(\tau=0.5\), which provides a robust balance across all settings.

\begin{table}[t]
	\centering
	\small
	\caption{Threshold sweep for rule-based LGS failure classification. For each threshold \(\tau\), a seed--goal pair is predicted to be a low-success failure when \(LGS \leq \tau\). Failure is defined as \(\mathrm{Succ}_H(g)<0.25\).}
	\label{tab:app_lgs_threshold_sweep}
	\resizebox{\textwidth}{!}{%
		\begin{tabular}{lrrrrr}
			\toprule
			Condition & \(\tau\) & Precision & Recall & F1 & Accuracy \\
			\midrule
			$8{\times}8$ TD uniform & 0.00 & 1.000 & 0.171 & 0.292 & 0.569 \\
			$8{\times}8$ TD uniform & 0.25 & 0.986 & 0.320 & 0.483 & 0.644 \\
			$8{\times}8$ TD uniform & 0.33 & 0.988 & 0.611 & 0.755 & 0.794 \\
			$8{\times}8$ TD uniform & 0.50 & 0.921 & 0.929 & 0.925 & 0.922 \\
			$8{\times}8$ TD uniform & 0.67 & 0.839 & 0.973 & 0.901 & 0.889 \\
			$8{\times}8$ TD uniform & 0.75 & 0.800 & 0.994 & 0.887 & 0.868 \\
			\midrule
			$8{\times}8$ MC uniform & 0.00 & 1.000 & 0.268 & 0.422 & 0.457 \\
			$8{\times}8$ MC uniform & 0.25 & 0.987 & 0.466 & 0.633 & 0.599 \\
			$8{\times}8$ MC uniform & 0.33 & 0.983 & 0.654 & 0.786 & 0.735 \\
			$8{\times}8$ MC uniform & 0.50 & 0.918 & 0.865 & 0.891 & 0.843 \\
			$8{\times}8$ MC uniform & 0.67 & 0.885 & 0.950 & 0.917 & 0.872 \\
			$8{\times}8$ MC uniform & 0.75 & 0.861 & 0.976 & 0.915 & 0.865 \\
			\midrule
			$8{\times}8$ TD edge & 0.00 & 1.000 & 0.165 & 0.283 & 0.588 \\
			$8{\times}8$ TD edge & 0.25 & 0.981 & 0.322 & 0.484 & 0.662 \\
			$8{\times}8$ TD edge & 0.33 & 0.970 & 0.618 & 0.755 & 0.802 \\
			$8{\times}8$ TD edge & 0.50 & 0.834 & 0.933 & 0.881 & 0.876 \\
			$8{\times}8$ TD edge & 0.67 & 0.759 & 0.984 & 0.857 & 0.838 \\
			$8{\times}8$ TD edge & 0.75 & 0.745 & 0.990 & 0.850 & 0.828 \\
			\midrule
			$8{\times}8$ BN TD uniform & 0.00 & 1.000 & 0.262 & 0.415 & 0.607 \\
			$8{\times}8$ BN TD uniform & 0.25 & 1.000 & 0.372 & 0.542 & 0.665 \\
			$8{\times}8$ BN TD uniform & 0.33 & 0.977 & 0.686 & 0.806 & 0.824 \\
			$8{\times}8$ BN TD uniform & 0.50 & 0.924 & 0.872 & 0.897 & 0.893 \\
			$8{\times}8$ BN TD uniform & 0.67 & 0.830 & 0.980 & 0.899 & 0.883 \\
			$8{\times}8$ BN TD uniform & 0.75 & 0.807 & 0.984 & 0.887 & 0.866 \\
			\midrule
			$12{\times}12$ TD uniform & 0.00 & 1.000 & 0.148 & 0.258 & 0.332 \\
			$12{\times}12$ TD uniform & 0.25 & 0.998 & 0.549 & 0.709 & 0.646 \\
			$12{\times}12$ TD uniform & 0.33 & 0.998 & 0.730 & 0.843 & 0.787 \\
			$12{\times}12$ TD uniform & 0.50 & 0.939 & 0.903 & 0.921 & 0.878 \\
			$12{\times}12$ TD uniform & 0.67 & 0.933 & 0.950 & 0.942 & 0.908 \\
			$12{\times}12$ TD uniform & 0.75 & 0.910 & 0.999 & 0.952 & 0.922 \\
			\bottomrule
		\end{tabular}
	}
\end{table}

The sweep shows the expected precision--recall tradeoff. The zero-support rule has perfect precision in every setting, matching Proposition~\ref{prop:zero_lgs}, but low recall because many failures have weak nonzero support. Intermediate thresholds recover more failures. The fixed threshold \(\tau=0.5\) provides a stable balance across all five settings, with F1 scores above \(0.88\). Higher thresholds further increase recall but begin to reduce precision, especially in the edge-biased condition.

\subsection{Exact and compressed LGS stratification}
\label{app:lgs_stratification}

Table~\ref{tab:app_lgs_stratification_compressed} shows how local support stratifies the global policy graph. We compress exact LGS values into three bands: low support, high partial support, and full support. For each band, we report mean finite-horizon success, the percentage of low-success goals, mean goal-basin fraction, largest competing basin, and fragmentation. This table is descriptive rather than a separate predictive model: it shows how a local property of the policy near the goal is associated with the global organization of trajectories.

\begin{table}[t]
	\centering
	\small
	\caption{Compressed stratification by local goal support. Exact LGS levels are aggregated into low support, high partial support, and full support. Means are weighted by the number of seed--goal graphs in each exact LGS level. Low-support goals have small goal basins and large competing basins, while full-support goals usually have large goal basins and low failure rates.}
	\label{tab:app_lgs_stratification_compressed}
	\resizebox{\textwidth}{!}{%
		\begin{tabular}{llrrrrrr}
			\toprule
			Condition & LGS band & $n$ & Mean succ. & Failure \% & Goal basin & Comp. basin & Frag. \\
			\midrule
			$8{\times}8$ TD uniform & Low support $(LGS \leq 0.5)$ & 672 & 0.095 & 92.1 & 0.109 & 0.760 & 0.143 \\
			$8{\times}8$ TD uniform & High partial $(0.5 < LGS < 1)$ & 155 & 0.483 & 27.7 & 0.491 & 0.415 & 0.155 \\
			$8{\times}8$ TD uniform & Full support $(LGS=1)$ & 453 & 0.915 & 0.9 & 0.916 & 0.068 & 0.051 \\
			\midrule
			$8{\times}8$ MC uniform & Low support $(LGS \leq 0.5)$ & 894 & 0.072 & 91.8 & 0.087 & 0.546 & 0.399 \\
			$8{\times}8$ MC uniform & High partial $(0.5 < LGS < 1)$ & 182 & 0.268 & 57.7 & 0.279 & 0.415 & 0.415 \\
			$8{\times}8$ MC uniform & Full support $(LGS=1)$ & 204 & 0.586 & 11.3 & 0.593 & 0.280 & 0.251 \\
			\midrule
			$8{\times}8$ TD edge & Low support $(LGS \leq 0.5)$ & 706 & 0.125 & 83.4 & 0.139 & 0.778 & 0.094 \\
			$8{\times}8$ TD edge & High partial $(0.5 < LGS < 1)$ & 133 & 0.498 & 27.1 & 0.506 & 0.453 & 0.066 \\
			$8{\times}8$ TD edge & Full support $(LGS=1)$ & 441 & 0.952 & 1.4 & 0.952 & 0.041 & 0.015 \\
			\midrule
			$8{\times}8$ BN TD uniform & Low support $(LGS \leq 0.5)$ & 523 & 0.074 & 92.4 & 0.092 & 0.560 & 0.386 \\
			$8{\times}8$ BN TD uniform & High partial $(0.5 < LGS < 1)$ & 152 & 0.356 & 40.8 & 0.369 & 0.351 & 0.472 \\
			$8{\times}8$ BN TD uniform & Full support $(LGS=1)$ & 365 & 0.632 & 2.5 & 0.639 & 0.168 & 0.542 \\
			\midrule
			$12{\times}12$ TD uniform & Low support $(LGS \leq 0.5)$ & 2172 & 0.056 & 93.9 & 0.063 & 0.654 & 0.300 \\
			$12{\times}12$ TD uniform & High partial $(0.5 < LGS < 1)$ & 307 & 0.247 & 70.4 & 0.252 & 0.491 & 0.357 \\
			$12{\times}12$ TD uniform & Full support $(LGS=1)$ & 401 & 0.829 & 0.7 & 0.830 & 0.081 & 0.305 \\
			\bottomrule
		\end{tabular}
	}
\end{table}

Across all settings, low-support goals have very low success, small goal basins, and large competing basins. Full-support goals have much larger goal basins and substantially lower failure rates. The TD regimes show the sharpest transition: under TD-uniform and TD-edge, full support corresponds to mean success above \(0.91\) and almost no low-success failures. MC and bottleneck policies exhibit the same ordering but remain more globally contested, with larger residual competing basins or fragmentation even when local support is complete. Thus, LGS captures a robust local-to-global ordering, while competition and fragmentation explain why local support is not always sufficient for full success.

\subsection{Threshold sweep for LGS failure classification}
\label{app:lgs_threshold_sweep}

The threshold analysis uses LGS as a rule-based diagnostic, not as a learned classifier. For a seed--goal pair, let \(y_{\mathrm{fail}}=1\) denote a low-success goal, defined as \(\mathrm{Succ}_H(g)<0.25\). For a threshold \(\tau\), we define the predicted failure label as
\[
\widehat{y}_{\mathrm{fail}}(g;\tau)
=
\mathbf{1}\{\mathrm{LGS}(g)\leq \tau\}.
\]
Thus, a goal is predicted to fail if its local goal support is at or below the chosen threshold. No parameters are fitted; the sweep simply evaluates the same deterministic diagnostic at different operating points.

The threshold \(\tau\) is varied over the discrete LGS values induced by the local neighborhood structure. At \(\tau=0\), the rule predicts failure only for goals with zero local support. This is a highly conservative rule: it has perfect or near-perfect precision because zero local support precludes goal entry, but low recall because many failing goals have weak nonzero support. Increasing \(\tau\) labels more goals as predicted failures. This typically increases recall, because more true failures are detected, but reduces precision, because some partially successful goals are also included.

For each threshold, we report precision, recall, F1, and accuracy with respect to the low-success failure label. Precision is the fraction of predicted failures that are actual failures, recall is the fraction of actual failures that are detected, F1 is the harmonic mean of precision and recall, and accuracy is the fraction of correctly classified seed--goal pairs.

Table~\ref{tab:app_lgs_threshold_sweep} reports the full threshold sweep for the deterministic rule above. The main paper reports the fixed threshold \(\tau=0.5\), which gives high F1 across all conditions. The longer raw output additionally records TP, FP, FN, and TN counts and is retained in the analysis outputs for auditability.
\begin{figure*}[t]
	\centering
	\includegraphics[width=\textwidth]{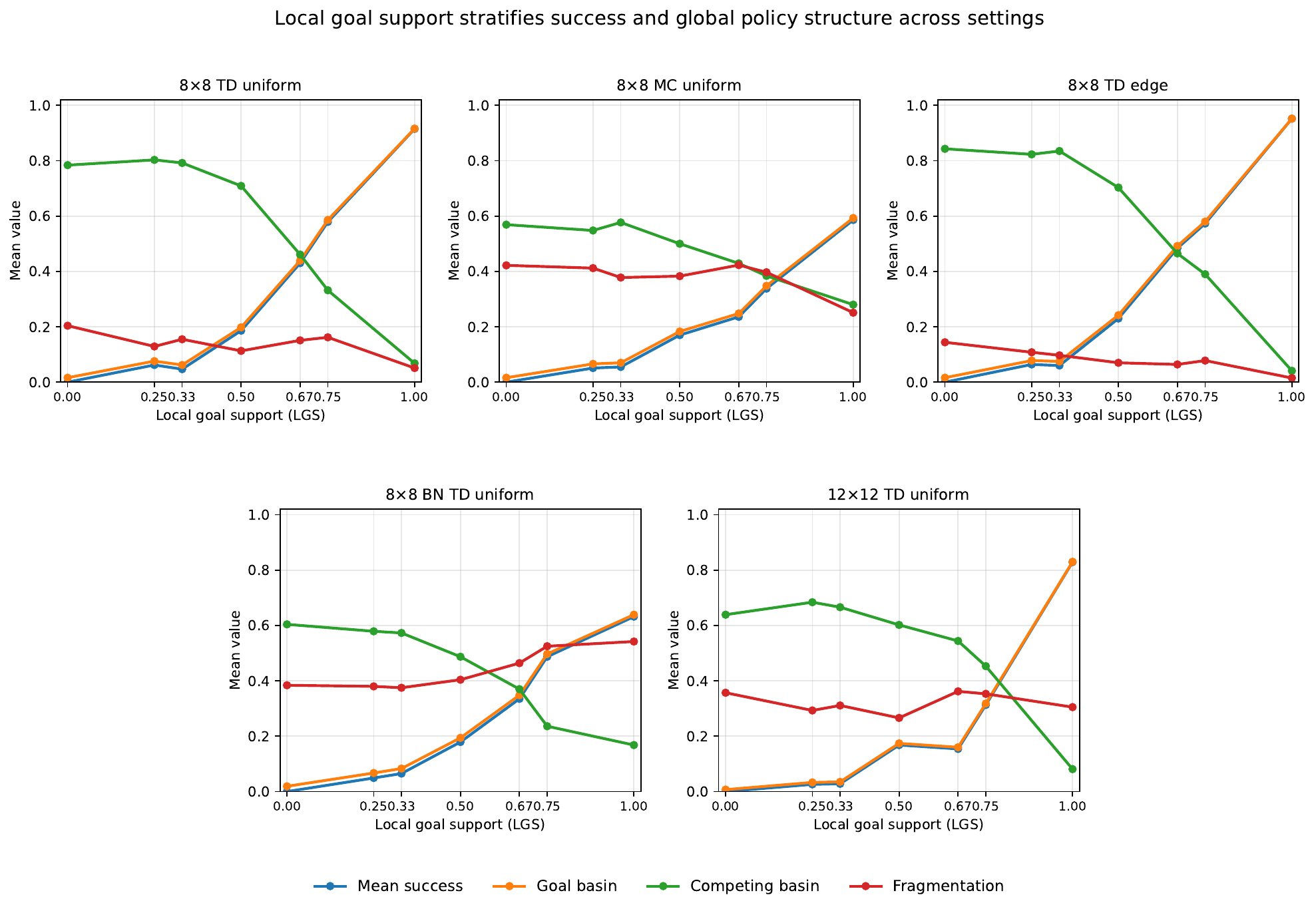}
	\caption{LGS stratification curves across experimental settings. Each panel plots mean finite-horizon success, goal-basin fraction, largest competing basin, and fragmentation against exact local goal support. Increasing LGS is generally associated with a transition from competitor-dominated failure to goal-basin dominance, though MC and bottleneck settings retain more residual competition or fragmentation.}
	\label{fig:app_lgs_stratification_all}
\end{figure*}

Across settings, increasing LGS corresponds to a consistent local-to-global transition. Low-support goals have low success, small goal basins, and large competing basins. Full-support goals usually have much higher success and substantially larger goal basins, with competing basins reduced. The transition is sharpest in the TD settings, where full local support almost always corresponds to near-complete goal dominance. In contrast, MC and bottleneck policies retain more residual competition or fragmentation, showing that local support is highly informative but not sufficient for complete success. These residual cases motivate the additional structural failure taxonomy based on competing attractors and fragmentation.
Overall, LGS stratifies not only rollout success but also the global organization of the induced policy graph.

\paragraph{Full Local Support Does Not Guarantee Global Success.}
Proposition $A.2$ shows that local support has an asymmetric implication: zero support rules out goal entry, but full local support does not guarantee global success. Table~\ref{tab:app_full_lgs4_weak_success} illustrates this empirically by selecting interior goals with full local support but weak or partial global success. All listed cases have local support count \(4\), \(\mathrm{LGS}=1\), and \(\mathrm{Succ}_H(g)<0.75\). Thus, every immediate neighbor of the goal maps directly into the goal, but the learned policy still fails from a substantial fraction of starts.
\begin{table}[t]
	\centering
	\small
	\caption{Full local support with weak or partial global success. All cases are interior goals with local support count \(4\), \(\mathrm{LGS}=1\), and \(\mathrm{Succ}_H(g)<0.75\). Even when every immediate neighbor maps into the goal, failures persist because states outside the local neighborhood can be captured by competing or fragmented basins. Regime columns report percentages within these full-support weak-success cases.}
	\label{tab:app_full_lgs4_weak_success}
	\begin{tabular}{lrrrrrrrr}
		\toprule
		Condition & \(n\) & Succ. & Goal basin & Comp. basin & Frag. & Comp.-dom. \% & Frag. \% & Partial \% \\
		\midrule
		$8{\times}8$ TD uniform    &  34 & 0.587 & 0.594 & 0.324 & 0.210 &  8.8 & 32.4 & 50.0 \\
		$8{\times}8$ TD edge       &  16 & 0.478 & 0.486 & 0.437 & 0.121 & 37.5 & 12.5 & 43.8 \\
		$8{\times}8$ MC uniform    &  73 & 0.519 & 0.526 & 0.297 & 0.359 &  5.5 & 63.0 & 30.1 \\
		$12{\times}12$ TD uniform  & 119 & 0.593 & 0.596 & 0.210 & 0.495 &  3.4 & 68.1 & 23.5 \\
		$8{\times}8$ BN TD uniform &  80 & 0.607 & 0.614 & 0.179 & 0.558 &  1.2 & 87.5 &  8.8 \\
		\bottomrule
	\end{tabular}
\end{table}

\subsection{Rule-based structural taxonomy}
\label{app:structural_taxonomy}

The structural taxonomy in Section~\ref{sec:taxonomy} is intended as a post-hoc interpretive diagnostic, not as a learned classifier. After training, each fixed goal \(g\) induces a deterministic functional graph under greedy execution. For each seed--goal graph, we summarize this graph using three quantities: the goal-basin fraction \(\beta_g\), the largest competing basin \(\mathrm{Comp}(g)\), and the fragmentation index \(\mathrm{Frag}(g)\). These capture whether the intended goal dominates, whether a single non-goal attractor dominates, and whether trajectory mass is dispersed across multiple attractors.

We assign each seed--goal graph to one of four regimes using the following rules:
\[
\begin{aligned}
	\text{Goal-dominant} 
	&\quad \text{if } \beta_g \geq 0.75, \\
	\text{Competitor-dominated} 
	&\quad \text{if } \mathrm{Comp}(g)\geq 0.5 \text{ and } \mathrm{Comp}(g)>\beta_g, \\
	\text{Fragmented} 
	&\quad \text{if } \mathrm{Frag}(g)\geq 0.3,\ \beta_g<0.75,\ \mathrm{Comp}(g)<0.5, \\
	\text{Partial/contested} 
	&\quad \text{otherwise.}
\end{aligned}
\]
The thresholds are deliberately simple and interpretable. They are not optimized to maximize a predictive score. Instead, they encode qualitative distinctions observed repeatedly in the policy maps: goal-basin dominance, dominance by a single non-goal attractor, dispersed failure across several attractors, and intermediate cases in which the goal and competing basins coexist.

Table~\ref{tab:taxonomy_summary} in the main text reports the overall regime summary. The regimes separate sharply by both success and graph structure: goal-dominant cases have high success and large goal basins, competitor-dominated cases have very low success and large non-goal basins, and fragmented cases have intermediate success with trajectory mass dispersed across multiple attractors. Thus, local goal support provides a local screen for likely failure, while the taxonomy explains the global form of residual failure after the full policy graph is examined.

\begin{table}[t]
	\centering
	\small
	\caption{Percentage of seed--goal graphs assigned to each structural regime by condition. Entries are row percentages.}
	\label{tab:app_taxonomy_by_condition}
	\begin{tabular}{lrrrr}
		\toprule
		Condition & Comp.-dom. & Fragmented & Goal-dom. & Partial \\
		\midrule
		$8{\times}8$ TD uniform    & 50.8 &  9.4 & 34.1 & 5.8 \\
		$8{\times}8$ TD edge       & 54.8 &  4.5 & 35.2 & 5.5 \\
		$8{\times}8$ MC uniform    & 46.4 & 42.3 &  5.7 & 5.6 \\
		$12{\times}12$ TD uniform  & 64.9 & 23.4 &  9.6 & 2.1 \\
		$8{\times}8$ BN TD uniform & 36.7 & 49.3 & 11.0 & 3.0 \\
		\bottomrule
	\end{tabular}
\end{table}

Table~\ref{tab:app_taxonomy_by_condition} shows that the distribution of regimes changes systematically across settings. The standard \(8{\times}8\) TD setting produces a substantial number of goal-dominant graphs, but also many competitor-dominated failures. Edge-biased sampling preserves a similar goal-dominant share but does not eliminate competitor-dominated failures. MC learning produces far fewer goal-dominant graphs and many more fragmented cases. The bottleneck environment is especially fragmented, consistent with constrained geometry splitting the state space into separated routing regions. The \(12{\times}12\) setting is dominated by competitor failures, suggesting that larger state spaces make global basin organization harder even when some local support is present.

For a separate descriptive cross-tabulation, we discretize finite-horizon success into five bins:
\[
\begin{aligned}
	\text{Zero} &:\quad \mathrm{Succ}_H(g)=0,\\
	\text{Low} &:\quad 0 < \mathrm{Succ}_H(g) < 0.25,\\
	\text{Partial} &:\quad 0.25 \leq \mathrm{Succ}_H(g) < 0.75,\\
	\text{High} &:\quad 0.75 \leq \mathrm{Succ}_H(g) < 1,\\
	\text{Perfect} &:\quad \mathrm{Succ}_H(g)=1.
\end{aligned}
\]
These bins are used only to interpret the taxonomy; the taxonomy itself is defined from graph quantities, not from success labels.

\begin{table}[t]
	\centering
	\small
	\caption{Success-category composition within each structural regime. Entries are row percentages.}
	\label{tab:app_taxonomy_success_categories}
	\begin{tabular}{lrrrrr}
		\toprule
		Regime & Zero & Low & Partial & High & Perfect \\
		\midrule
		Competitor-dominated & 17.2 & 76.3 &  6.4 &  0.0 &  0.0 \\
		Fragmented           & 12.0 & 47.5 & 40.5 &  0.0 &  0.0 \\
		Goal-dominant        &  0.0 &  0.0 &  3.2 & 36.4 & 60.4 \\
		Partial / contested  &  0.0 &  0.0 &100.0 &  0.0 &  0.0 \\
		\bottomrule
	\end{tabular}
\end{table}

Table~\ref{tab:app_taxonomy_success_categories} provides a sanity check on the taxonomy. Goal-dominant graphs are almost always high- or perfect-success cases. Competitor-dominated graphs are overwhelmingly zero- or low-success cases. Fragmented graphs mostly fall into low or partial success, consistent with dispersed attractor structure. Partial/contested graphs occupy the intermediate partial-success regime. These results support the taxonomy as a compact summary of the qualitative failure modes visible in the policy maps.

\end{document}